\newcommand{\maketitleTwo}{
  \if@twocolumn
    \twocolumn[%
      \mymaketitle
      \vskip 0.38cm
      \begin{center}
    \captionsetup{type=figure}
    \vspace{-4mm}
    \includegraphics[width=\linewidth]{}
    \vspace{-7mm}
    \caption{{\bf Example generations for \ours under different modalities.}
      Context frames marked by dashed boundaries.
      \ours enables variable-length non-autoregressive generation by learning to both denoise and insert frames in any order as sampling progresses (captions in the supplementary).}
    \label{fig:teaser}
    \vspace{0mm}
\end{center}

      \vskip 0.38cm
    ]%
  \else
    \mymaketitle
    \par\vspace{0.38cm}
    
  \fi
}
\newif\ifcvpr
\title{Flowception: Temporally Expansive Flow Matching for Video Generation}
\newcommand{\ours}{Flowception\xspace}
\definecolor{EditTeal}{RGB}{5, 146, 148}
\colorlet{mygreen}{green!55!black}
\definecolor{myyellow1}{HTML}{D89A3C}
\colorlet{myyellow}{myyellow1!90!black}
\definecolor{metablue}{HTML}{0064E0}
\definecolor{cvprblue}{rgb}{0.21,0.49,0.74}
\definecolor{colQuality}{RGB}{240, 248, 255}  
\definecolor{colFVD}{RGB}{255, 250, 240}      
\newcommand{\cellhi}{\cellcolor{metablue!10}}
\def\mypar#1{\vspace{2mm}\noindent\textbf{#1.}\hspace{2mm}}
\def\tb#1{}
\def\jkb#1{}
\def\kar#1{}
\def\ricky#1{}
\def\todo#1{}
\def\john#1{}
\newcommand{\ptarget}{p_\text{target}}
\newcommand{\1}{\mathbbm{1}}
\newcommand{\X}{{X}}
\newcommand{\ins}{{{\text{ins}}}}
\newcommand{\strip}{f_{\text{strip}}}
\newcommand{\blank}{\varnothing}
\newcommand{\Z}{{Z}}
\newcommand{\E}{\mathbb{E}}
\DeclareRobustCommand\onedot{\futurelet\@let@token\@onedot}
\def\@onedot{\ifx\@let@token.\else.\null\fi\xspace}
\def\eg{e.g\onedot} 
\def\ie{i.e\onedot} 
 \def\vs{vs\onedot}
\def\wrt{w.r.t\onedot}
\DeclareMathOperator{\clip}{clip}
\theoremstyle{plain}  
\newtheorem{theorem}{Theorem}[section]  
\newtheorem{cor}[theorem]{Corollary}
\newtheorem{lem}[theorem]{Lemma}
\newtheorem{prop}[theorem]{Proposition}
\author[1,2]{Tariq Berrada Ifriqi}
\author[1]{John Nguyen}
\author[2]{Karteek Alahari}
\author[1]{Jakob Verbeek}
\author[1]{Ricky T. Q. Chen}
\affiliation[1]{FAIR at Meta}
\affiliation[2]{Univ.\ Grenoble Alpes, Inria, CNRS, Grenoble INP, LJK, France}
\abstract{We present Flowception, a novel non-autoregressive and variable-length video generation framework. Flowception learns a probability path that interleaves discrete frame insertions with continuous frame denoising. Compared to autoregressive methods, Flowception alleviates error accumulation/drift as the frame insertion mechanism during sampling serves as an efficient compression mechanism to handle long-term context. Compared to full-sequence flows, our method reduces FLOPs for training three-fold, while also being more amenable to local attention variants, and allowing to learn the length of videos jointly with their content. Quantitative experimental results show improved FVD and VBench metrics over autoregressive and full-sequence baselines, which is further validated with qualitative results. Finally, by learning to insert and denoise frames in a sequence, Flowception seamlessly integrates different tasks such as image-to-video generation and video interpolation.}
\begin{document}

\maketitleTwo

\vspace{-2mm}
\section{Introduction}
\vspace{-2mm}
Recent advances in diffusion and flow-matching have unlocked high-fidelity image generation~\citep{sd3, flux2024} and are rapidly migrating to video. 
Current video generation models typically adopt one of two paradigms: \emph{full-sequence} generation denoises all frames jointly with full attention \citep{wan2025, HaCohen2024LTXVideo, opensora}, while  temporal \emph{autoregressive} (AR) generation produces frames (or blocks of frames) sequentially in a left-to-right order~\citep{genie, song2025historyguidedvideodiffusion}. 
Full-sequence models benefit from bidirectional attention, enabling the model to correct errors during denoising and achieving superior generation quality. 
However, parallel denoising of the entire sequence prevents real-time streaming, as frames cannot be returned until fully denoised.  
Moreover, full-sequence models use a fixed generation length and incur a quadratic attention cost in the number of frames, limiting long-term generation.

\begin{figure}
    \centering
    \includegraphics[width=.5\textwidth]{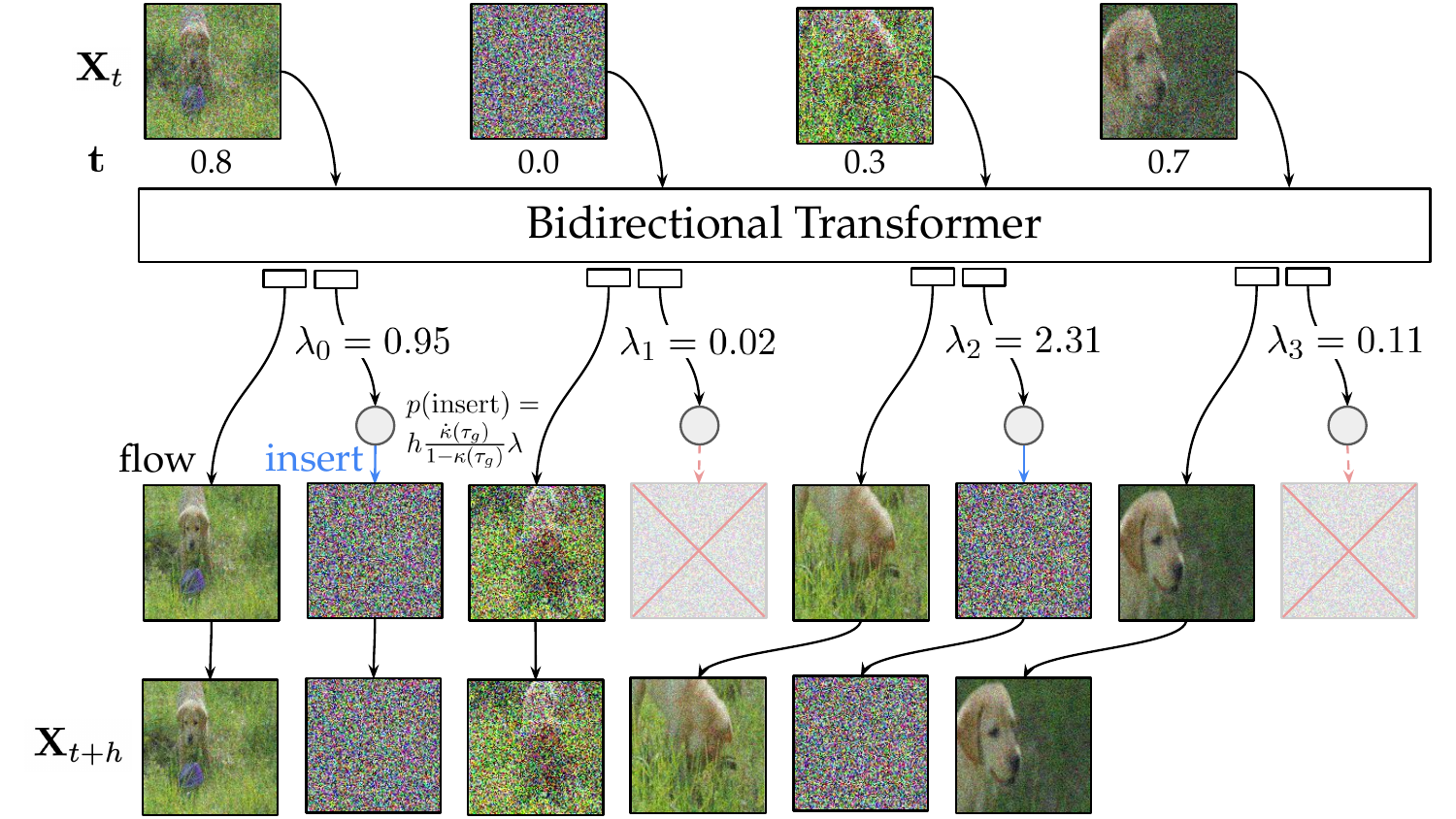}
    \caption{{\bf \ours sampling.} At each iteration, the model predicts, for each frame $i$, a  velocity field and an insertion rate $\lambda_i$. 
    Velocities are used to denoise frames while the insertion rates define the probability to  insert a new frame to the right of existing ones. 
    The model uses per-frame time values, set to  $t_i\!=\!0$ when they are inserted, and reaching $t_i\!=\!1$ when they are fully denoised.
    }
    \label{fig:flowception_sampling}
\end{figure}
    
In contrast, AR methods enable streaming generation by making each frame immutable once generated, allowing immediate display to users while subsequent frames attend to prior outputs. 
However, these AR approaches suffer from critical exposure bias \citep{zhang2025framepack,bengio15nips}: training uses ground-truth frames as context, while inference conditions on the model's own imperfect generations. 
This train-test mismatch prevents the model from learning to recover from its mistakes, causing minor artifacts to cascade as errors accumulate across frames, rapidly degrading video quality. 
Moreover, to enable KV caching, without which sampling from AR models becomes prohibitively expensive, AR methods typically use a causal attention mask which limits the expressiveness of the models. 
These trade-offs hinder long-term and efficient video synthesis.

In this work, we address the error accumulation and the limiting causal attention pattern of AR methods, while also avoiding the 
fixed-length generation requirement and reducing the computational cost of full-sequence models.
To this end, we introduce {\bf \ours}, a {variable-length}, {non-autoregressive video generation framework} that interleaves two processes throughout sampling: 
(i) continuous flow matching  denoising of  existing frames, and 
(ii) stochastic discrete  insertion of frames 
in between existing ones. 
Because the model flexibly determines the  insertion locations, the same model naturally performs text-to-video, image-to-video, video-to-video, video interpolation, and scene completion by toggling which conditioning input frames  are {\it active} (allow insertions to their right) or {\it passive} (no insertions). 
See    \Cref{fig:teaser} for an illustration of image-to-video and video interpolation results generated by our model.

Concretely, at each timestep the model predicts a velocity field over the existing frames and a per-frame insertion rate.
By sampling based on the insertion rate, a new frame may be inserted and initialized with a sample from a unit Gaussian  prior distribution. 
The new frame is then  subsequently denoised in the context of other already instantiated and partially denoised frames. 
This yields a coupled ODE–jump process over variable-length sequences and supports {\it any-order, any-length} generation by design.  
See \Cref{fig:flowception_sampling} for an illustration of the generation process. 

Beyond a more flexible generative model, our interleaved insertion-denoising schedule delivers an  efficiency gain, since   early in sampling only a small active subset of frames is actively denoised while unrevealed frames are marginalized out. 
Under a linear frame insertion schedule, the computational cost of the  quadratic attention term along the generation trajectory averages to a third of the attention cost incurred by a full-sequence model. 
Compared to AR with KV caching, \ours shows comparable sampling cost with more robustness under low NFEs.

We conduct extensive experiments with three datasets, Tai-Chi-HD, RealEstate10K and Kinetics 600, and consider (text/image)-to-video generation and video interpolation tasks. 
We find that, beyond being a more flexible framework, \ours leads to comparable or better performance than full-sequence and autoregressive paradigms under the same compute budget,
with consistent improvements in terms of FVD and VBench metrics.

\textbf{Contributions:}
\begin{enumerate}[leftmargin=15pt]
    \item We introduce \ours, a theoretically grounded video generation framework, which couples learned frame insertions with continuous flow matching in a unified model.
    \item We show how \ours can solve different tasks by conditioning on any set of frames, based on their relative order alone.
    \item We present an efficiency analysis showing an average of $3\times$ FLOPs reduction during training, and $1.5\times$ during sampling, \wrt full-sequence model.
    \item We present extensive experimental results improving over full-sequence and autoregressive approaches on multiple datasets.
\end{enumerate}
\section{Related work}

\mypar{Diffusion \& Flows}
Diffusion~\citep{pmlr-v37-sohl-dickstein15,ho2020denoising, song2021scorebased, dhariwal2021diffusion} and flow models~\citep{lipman2023flow, ma2024sitexploringflowdiffusionbased, sd3} are a class of generative models that operate by learning to model the infinitesimal transitions of the probability path going from source (noise distribution) to target (data manifold). 
They are considered state-of-the-art generative modeling tools for modalities ranging from images~\citep{ldm, podell2024sdxl, chen2023pixartalpha, sd3}, audio~\citep{NEURIPS2023_e1b619a9,controllable-music}, and video~\citep{jin2024pyramidal,polyak2025moviegencastmedia}. 
These models generate data by starting with a simple prior such as a unit Gaussian, and iteratively refining  samples  to recover samples from the data distribution model.
The simple training recipe underlying these models proved very effective through  scalability and stable training~\citep{improved_conditions}.

\mypar{Video generation}
The success achieved by scaling text-to-image models \citep{podell2024sdxl, sd3, flux2024} paved the way towards applying the same paradigms to video generation.
Such works fall broadly into two categories.
The first line of work~\citep{opensora, opensora2, wan2025} follows the full-sequence paradigm where the video is treated as one large tensor and all frames are denoised simultaneously.
However, this results in prohibitively large attention computations, hindering the development of such methods for long sequences.

Another line of work \citep{deng2024nova, chen2025diffusion} takes an autoregressive approach to iteratively predict the next frame (or block of frames) conditioned on previous frames.
This approach is known to have difficulty generating long videos, as sampling fixes early frames once they are generated and later frames are conditioned on (imperfectly) generated earlier frames (unlike in training, where conditioning is always based on ground-truth frames), leading to error accumulation that harms long-term generation.
To resolve this, some methods rely on partial noising of the context~\citep{song2025historyguidedvideodiffusion}, but this prevents KV caching, making sampling prohibitively expensive.
Even without KV caching, our \ours remains significantly faster than these autoregressive methods due to its interleaved insertion and denoising mechanism, whereas autoregressive generation is fundamentally constrained by sequential frame-by-frame generation.

Another family of approaches factorizes long video generation into sparse keyframe synthesis followed by video interpolation.
MovieDreamer~\citep{zhao2024moviedreamer} first autoregressively predicts a sequence of visual keyframes and then applies an image-to-video model to synthesize short clips around each keyframe, using CLIP features of the anchor frame rather than the last generated frame to mitigate drift across clips.
ART-V~\citep{artv} instead performs frame-wise autoregressive generation in a text-image-to-video (TI2V) setting: each new frame is produced by a diffusion model conditioned on both the input keyframe and the previously generated frames, with a masked diffusion mechanism that selectively reuses information from the given frame to reduce appearance drift over long horizons.

Several other directions have been explored to reduce complexity.
These include the use of highly compressive tokenizers~\citep{HaCohen2024LTXVideo,nvidia2025cosmos}, hierarchical or packed representations that compress remote frames before attending to them~\citep{jin2024pyramidal, zhang2025framepack}, and architectural changes that distribute or route attention over multiple context sources~\citep{fan2025vchitect, cai2025moc, song2025historyguidedvideodiffusion, zhao2024moviedreamer}.
However, these approaches still depend on design choices about which context sources or time ranges to emphasize, rather than fully learning relevance end-to-end.

More related to our work, several works explored an autoregressive approach, but denoise multiple frames simultaneously, with later frames being noisier than earlier frames~\citep{ai2025magi1,xie2025pavdm}. Differently from \ours,  however, these approaches  maintain  a strict left-to-right frame insertion pattern, preventing coordination between remote frames early in the denoising process as \ours allows it.

To enable multiple use cases, such as image-to-video and video-to-video,  some works~\citep{wan2025, HaCohen2024LTXVideo}  concatenate a frame-wise mask to the input of the model, which allows to condition on clean frames at various positions in the video. 
Similarly, \ours is capable of performing various tasks by conditioning on multiple input frames, and optionally constraining insertions to be to the right of some of those.


\section{Flowception}
We propose  \emph{Flowception}, 
a natural combination of the continuous Flow Matching~\citep{lipman2023flow} and the discrete Edit Flow~\citep{havasi2025edit} frameworks, with the goal of flexible video generation where existing frames are simultaneously denoised and new frames are inserted during the generation process. 
We first describe the parameterization and the generation procedure for this model, then derive a training scheme that adheres to the generation procedure. We will present notation loosely with high-level intuitive explanations; full derivations are in the supplementary material.

\subsection{Setup and model parameterization}

Flowception operates in the space of variable-length sequences of frames $\mathcal{X}=\bigcup_{n=0}^{\infty} \mathbb{R}^{n\times H \times W\times C}$, where $n$ denotes the  length of each sequence, and $H, W, C$ are the fixed height, width, and channel dimensions for each frame. 
We denote $\ell(X)$ as the length of a sequence $X \in \mathcal{X}$.

During generation, \ours  transports sequences and time values $(X, t)$ from an initial noise distribution $p_\text{src}$ to a data distribution $p_\text{data}$ through a \textit{continuous-time framework}.
We handle variable length by assigning \textit{per-frame time values}, collected in a vector $t \in \mathcal{T} = \bigcup_{n=0}^{\infty} [0,1]^n$, associating each frame to a different noise level.
Hence the \textit{input} of a Flowception model is a sequence of frames $X \in \mathcal{X}$ and a vector of per-frame time values $t \in \mathcal{T}$, where $X$ and $t$ are the same length.

To incorporate sequence length changes, we make use of an \textit{insertion operation}~\citep{havasi2025edit}. Given a sequence $X \in \mathcal{X}$, an insertion index $i$, and a noise frame $\varepsilon \sim \mathcal{N}(0, I)$, we define
\begin{equation}
    \ins(X, i, \varepsilon) = (X^1,\dots, X^i, \varepsilon, X^{i+1}, \dots, X^{n}),
\end{equation}
where the $X^i \in \mathbb{R}^{H \times W\times C}$ are frames in the sequence. The insertion operation inserts a 0-SNR frame which will then be denoised following the flow matching framework.

Frame insertions and flows form the primitive operations that we use during generation. 
Hence the \textit{output} of a Flowception model with parameters $\theta$ has two components at every position $i \in \{1,\dots,\ell(X)\}$: 
\begin{enumerate}
    \item An \textbf{insertion rate} $\lambda_i^\theta(X, t) \in \mathbb{R}_{\geq 0}$, which predicts the number of missing frames.
    \item A \textbf{velocity} $v_i^\theta(X, t) \in \mathbb{R}^{H \times W\times C}$, for denoising the existing frames.
\end{enumerate} 
These two components allow the model to simultaneously denoise an existing sequence of frames  and interpolate or extend the existing frame sequence.

\subsection{Generation procedure}

We start the generation process by sampling a fixed number of \textit{starting frames} $n_\text{start}$,
\begin{equation}
    p_\text{src}(X) = \textstyle \prod_{i=1}^{n_\text{start}} \mathcal{N}(X^i; 0, I),
\end{equation}
and assigning each frame the time $t^i=0$. 
We use a non-zero number of starting frames to start the generation process, otherwise the initial generation steps will simply be inserting noise frames anyway.

We then iteratively apply a transport step until all frames are clean, \ie, all $t^i=1$. 
Each transport step simultaneously flows the existing frames while also potentially inserting new frames with probability given by the insertion rates. 

\mypar{Global time value} The stopping criterion for the flow is when $t^i = 1$ for all frames, \ie we no longer modify any frame.
To impose a stopping criterion for the insertions, we introduce a \textit{global time value} $t_g$ which starts at $t_g = 0$ and frame insertions are only allowed while $t_g < 1$. Implementation-wise, we don't feed $t_g$ explicitly into the model, and is only tracked for sampling. This closely follows the design choice of OneFlow~\citep{nguyen2025oneflowconcurrentmixedmodalinterleaved} where insertion rates are not conditioned on time.
Accordingly, with a constant step size during inference, the maximum number of iterations is twice the amount needed for $t_g$ to arrive to $1$.

\mypar{Scheduler} 
We impose a distribution on the fraction of visible frames based on a monotonic scheduler $\kappa(t_g)$ that is a function of the global time value, with $\kappa(0)=0$, $\kappa(1)=1$.
Given a data sequence $X_1 \sim p_\text{data}$, the probability of each non-starting frame being in the sequence at a global time $t_g$ is, for each frame $X^i \in X_1$,
\begin{equation}\label{eq:schedule}
    \mathbb{P}(X^i \text{ in } X \;|\; \text{ global time }t_g ) = \kappa(t_g).
\end{equation}
We will primarily work with the linear scheduler $\kappa(t) = t$.

\mypar{Transport step} 
Given a sequence $X$ with per-frame times $t$, each transport step performs the following two operation simultaneously at all positions $i \in \{1, \dots, \ell(X)\}$:
\begin{enumerate}
    \item Apply the flow step
    \begin{equation}
        X^i = X^i + h v_i^\theta(X, t).
    \end{equation}
    \item With probability $h \tfrac{\dot{\kappa}(t_g)}{1 - \kappa(t_g)} \lambda_i^\theta$, insert a new frame
    \begin{equation}
        X = \ins(X, i, \varepsilon), \quad t = \ins(t, i, 0).
    \end{equation}
\end{enumerate}
The ratio $\tfrac{\dot{\kappa}(t_g)}{1 - \kappa(t_g)}$ ensures insertions occur in alignment with the distribution of visible frames imposed by the scheduler.

To summarize, we start with a fixed number of $n_\textrm{start}$ starting frames initialized as noise, advance a global time value $t_g$ from $0$ to $1$, and perform frame insertions along the way following the scheduler $\kappa(\cdot)$ in \Cref{eq:schedule}. 
Noise frames are inserted and then subsequently denoised, naturally producing per-frame time values $t_i$ and with a delay in the time values of later-inserted frames (\ie, $t_i \leq t_g$). We stop the generation when all frames have reached $t_i = 1$.
An illustration of a transport step is shown in \Cref{fig:flowception_sampling}. A pseudocode implementation is in the supplementary material. 

\subsection{Training procedure}

The \ours generation procedure induces a particular distribution over the visible frames and their time values. 
Frames may be at different time values, and some may not  be inserted yet. 
We must align with this distribution of time values and missing frames during training so that there is no distribution mismatch between training and generation. 
Despite this complexity, the distribution is controlled completely by the scheduler $\kappa$. In this section, we describe a simple training recipe to easily sample all time values and the states of each frame during training. 

\mypar{Extended time values} 
Extended time values are denoted $\tau$ and can take values outside of the interval $[0, 1]$, so we define a clip operation that results in real time values,
\begin{equation}
    t = \text{clip}(\tau) := \max\{0, \min\{1, \tau\} \}.
\end{equation}
Firstly, after the global time value reaches $t_g = 1$, while no insertion of new frames can occur, existing frames may still need more steps to be fully denoised. Thus we introduce an \emph{extended global time} $\tau_g \in [0, 2]$, with $t_g = \text{clip}(\tau_g)$.

Secondly, while $\tau_g \leq 1$, insertions occur according to the scheduler $\kappa(\cdot)$. 
For the linear scheduler new frames are inserted uniformly across the time interval $t_g \in [0, 1]$.
For the general case, please see the detailed derivations in the supplementary material.
Therefore, the time delay between the global extended time and per-frame extended time values follows:
\begin{equation}\label{eq:time_delay}
    u_i = \tau_g - \tau_i \sim \text{Unif}(0, 1), \quad \text{ and so } \quad \tau_i = \tau_g - u_i, 
\end{equation}
where $t_i = \text{clip}(\tau_i)$ computes the real per-frame times.
Note that $\tau_i \in [-1, 2]$ according to \eqref{eq:time_delay}. When $\tau_i < 0$, the frame is yet to be inserted and thus in a ``deleted'' state. 

\Cref{fig:flowception_time} illustrates the different frame states according to the extended time schedule. 
Each frame can either 
\begin{itemize}
    \item be in \emph{deleted state}  ($\tau_i < 0$),
    \item exist and be in \emph{flow state}  ($0 \leq \tau_i < 1$) , or
    \item be frozen and in \emph{terminal state} ($\tau_i \geq 1$).
\end{itemize}

During training, we  sample noisy sequences using
\begin{align}
    \tau_g \sim p(\tau_g), \quad \tau_i = \tau_g - u_i, \\
    t_i = \text{clip}(\tau_i), \quad X_1^\text{visible} = ( X^i \in X_1 | \tau_i > 0 ), \\
    X = t X_1^\text{visible} + (1-t) X_0, \quad X_0 \sim \mathcal{N}(0, I),
\end{align}
where $X_1 \sim p_\text{data}$ is a target sequence sample.

\begin{figure}
    \centering
    \includegraphics[width=.5\textwidth]{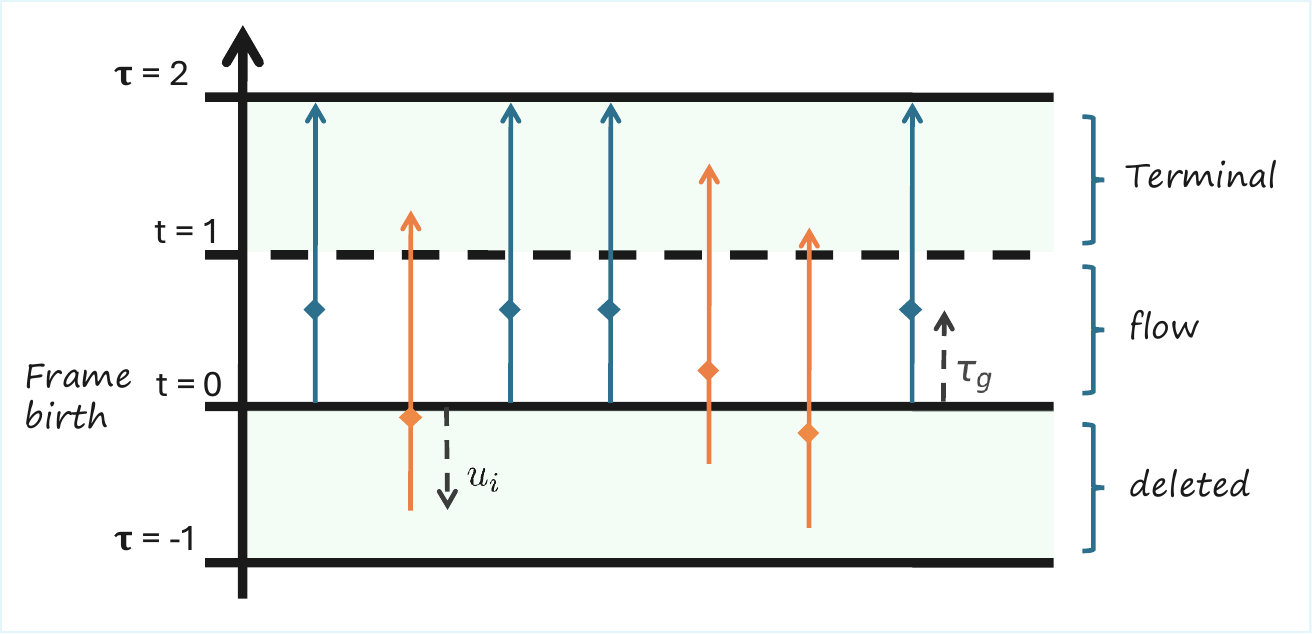}
    \caption{{\bf Illustration of the extended time scheduler for \ours training.} 
    In \ours, each frame has its own denoising time which depends on its insertion time.
    The global extended time $\tau_g$ progresses from $0$ to $2$, where insertion of new frames only occur when $\tau_g < 1$.
    Starting frames (in blue) are instantiated  at $\tau_g=0$, other frames (in orange) are  inserted later (when $\tau_g > 0$) and thus have a delay.
    With a linear scheduler, the insertion delays  follow a uniform distribution. 
    }
    \label{fig:flowception_time}
\end{figure}

\mypar{Insertion loss} 
At each position $i$ of the noisy sequence $X$, we denote $k^i$ as the number of missing frames to the right, between positions $i$ and $i+1$. Following OneFlow~\citep{nguyen2025oneflowconcurrentmixedmodalinterleaved}, the loss for training the insertion output $\lambda_i$, which predicts the number of missing frames, is given by the negative log-likelihood of the Poisson distribution
\begin{equation}
\mathcal{L}_{\mathrm{ins}} =
\sum_{i=1}^{\ell(X)} \lambda_i^\theta(X, t)\;-\;k^i\,\log\lambda_i^\theta(X, t).
\label{eq:lins-counts}
\end{equation}

\mypar{Velocity loss} At each position $i$ of the noisy sequence $X$, we learn to denoise the frame with the standard Flow Matching loss \citep{lipman2023flow},
\begin{equation}
    \mathcal{L}_\text{vel} = \left\lVert v^\theta(X, t) - (X_1^\text{visible} - X_0) \right\rVert^2.
\end{equation}

\subsection{Practical considerations \& implications}
\mypar{Architecture with per-frame time conditioning}
Differently from other frameworks, our model has two prediction heads: the dense velocity prediction head and the insertion rate prediction head.
In order to enable different timesteps per frame, we change the AdaLN~\citep{Peebles2022DiT} modulation to operate on a per-frame basis.
To distinguish between noisy and input/conditioning frames, we inflate the channel dimension of the inputs so that the first $C$ channels contain the noisy frames while the second group contains the input/conditioning frames (or zero padding).
Finally, for the rate prediction head, a learnable token is concatenated to each frame's tokens before being projected with a simple MLP and an exponential activation.

\begin{figure}[t]
    \centering
    \includegraphics[width=.5\textwidth]{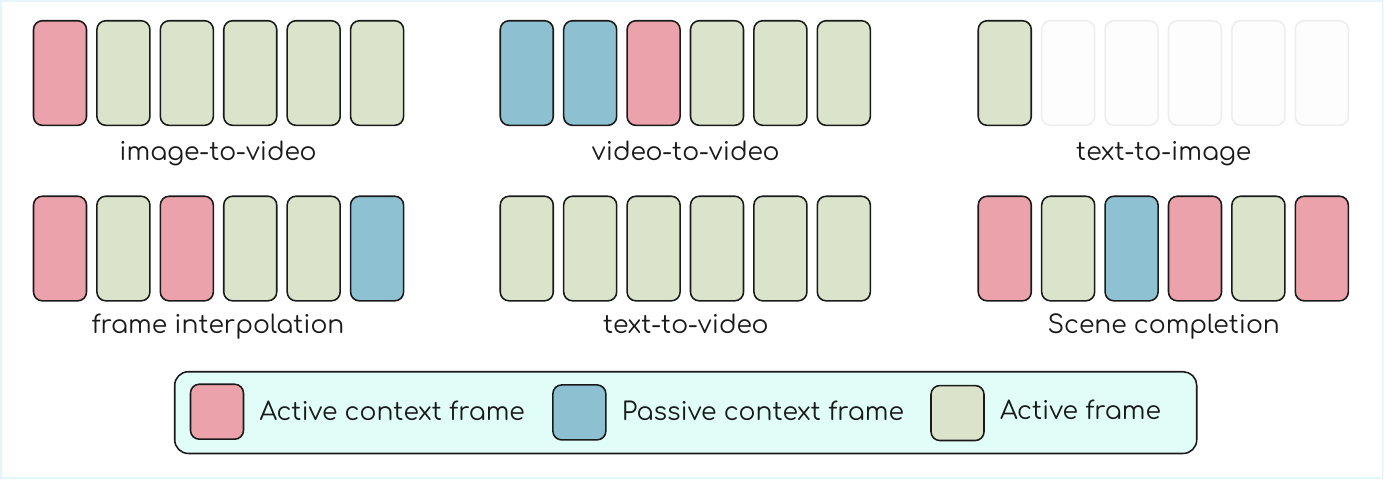}
    \caption{{\bf \ours natively supports different tasks.} 
    By choosing active and passive context frames the model can be used for 
    image-to-video,   video-to-video, text-to-image,
    frame interpolation, text-to-video,  and scene completion.
    }
    \label{fig:flowception_setups}
\end{figure}

\mypar{Supporting different tasks}
Since the frames present at time $t$ are only specified by their relative order, \ours  supports multiple tasks by feeding different {\it context frames}.
We distinguish two sorts of context frames: {\it active} ones which can induce insertions, and {\it passive} ones which do not.
As illustrated in \Cref{fig:flowception_setups}, mixing these context frames in different ways allows for image-to-video, video-to-video interpolation and scene completion without needing to specify the size of the gaps but only the relative order of the context frames.

\mypar{Computational cost} 
Early in sampling, only a small subset of frames is visible, which reduces the computational cost of these denoising steps compared to full-sequence denoising. 
Under linear $\kappa$ we have  $\mathbb{E}_{p(\tau_g)} [\kappa(\tau_g)^2]=\int_0^1 \tau_g^2 d\tau_g=1/3$, and so the expected attention FLOPs are $1/3$ of full-sequence Flow Matching, assuming an  equal number of denoising steps used in both models. 
During sampling, as the latest flow update occurs at $\tau_g=2$, our model uses  at most twice the flow iterations of the full-sequence model assuming the same time discretization, \ie $2/3$ the FLOP count of full-sequence Flow Matching.
Compared to AR models, Flowception has a comparable sampling cost with more robustness under low NFEs.
A more detailed analysis is available in the supplementary material.

\section{Experiments}

\ifcvpr
{
    \begin{SCtable*}
    \caption{\textbf{Image-to-Video generation results.} Models trained and sampled at 256 resolution. Comparing our Flowception approach to the Full Sequence and (causal) Autoregressive baselines. VBench Quality metrics include imaging quality, background consistency, aesthetic quality, motion smoothness, subject consistency, and dynamic degree. Best results for each dataset are highlighted.
    }
    \centering
     \scriptsize
    \begin{tabular}{@{}l l cccccc c@{}}
    \toprule
    & & \multicolumn{6}{c}{\textbf{VBench  Metrics}} & \multicolumn{1}{c}{\textbf{Quality}} \\
    \cmidrule(lr){3-8} \cmidrule(lr){9-9}
    \textbf{Dataset} & \textbf{Method} & 
    \textbf{Imaging$\uparrow$} & 
    \textbf{Background$\uparrow$} & 
    \textbf{Aesthetic$\uparrow$} & 
    \textbf{Motion$\uparrow$} & 
    \textbf{Subject$\uparrow$} & 
    \textbf{Dynamic$\uparrow$} & 
    \textbf{FVD$\downarrow$} \\
    \midrule
    \multicolumn{9}{l}{\textit{Kinetics-600}} \\
    & Full Seq. & 37.09 & 94.75 & 39.42 & \cellhi 99.39 & 92.46 & 44.35 & 204.65 \\
    & Autoreg. & 38.77 & 92.69 & 38.17 & 98.11 & 85.82 & \cellhi 54.66 & 201.34 \\
    & Flowception & \cellhi 41.92 & \cellhi 96.96 & \cellhi 42.05 & 99.21 & \cellhi 94.74 & 47.07 & \cellhi 164.73 \\
    \midrule
    \multicolumn{9}{l}{\textit{Tai-Chi-HD}} \\
    & Full Seq. & 47.48 & 94.42 & 54.93 & 99.26 & 92.18 & 18.61 & 27.30 \\
    & Autoreg. & 47.15 &  95.93 & \cellhi 54.98 & 99.37 & 93.41 & \cellhi 21.23 & 25.30 \\
    & Flowception & \cellhi 48.42 & \cellhi 95.93 & 54.96 & \cellhi 99.67 & \cellhi 94.43 & 20.02 & \cellhi 25.21 \\
    \midrule
    \multicolumn{9}{l}{\textit{RealEstate10K}} \\
    & Full Seq. & 50.11 & 93.48 & 44.53 & 99.08 & 85.85 & \cellhi 81.64 & 26.17\\
    & Autoreg.  & 48.55 & 93.84 & 44.48 & 99.16 & \cellhi 87.29 & 72.60 & 47.48 \\
    & Flowception & \cellhi 51.18 & \cellhi 96.93 & \cellhi 48.09 & \cellhi 99.30 & 87.02 & 78.59 & \cellhi 21.80\\
    \bottomrule
    \end{tabular}
    \label{tab:i2v_results}
    \end{SCtable*}
}
\else
{
    \begin{SCtable*}
    \caption{\textbf{Image-to-Video generation results.} Models trained and sampled at 256 resolution. Comparing our Flowception approach to the Full Sequence and (causal) Autoregressive baselines. VBench Quality metrics include imaging quality, background consistency, aesthetic quality, motion smoothness, subject consistency, and dynamic degree. Best results for each dataset are highlighted.
    }
    \centering
     \resizebox{1.4\linewidth}{!}{
    \begin{tabular}{@{}l l cccccc c@{}}
    \toprule
    & & \multicolumn{6}{c}{\textbf{VBench  Metrics}} & \multicolumn{1}{c}{\textbf{Quality}} \\
    \cmidrule(lr){3-8} \cmidrule(lr){9-9}
    \textbf{Dataset} & \textbf{Method} & 
    \textbf{Imaging$\uparrow$} & 
    \textbf{Background$\uparrow$} & 
    \textbf{Aesthetic$\uparrow$} & 
    \textbf{Motion$\uparrow$} & 
    \textbf{Subject$\uparrow$} & 
    \textbf{Dynamic$\uparrow$} & 
    \textbf{FVD$\downarrow$} \\
    \midrule
    \multicolumn{9}{l}{\textit{Kinetics-600}} \\
    & Full Seq. & 37.09 & 94.75 & 39.42 & \cellhi 99.39 & 92.46 & 44.35 & 204.65 \\
    & Autoreg. & 38.77 & 92.69 & 38.17 & 98.11 & 85.82 & \cellhi 54.66 & 201.34 \\
    & Flowception & \cellhi 41.92 & \cellhi 96.96 & \cellhi 42.05 & 99.21 & \cellhi 94.74 & 47.07 & \cellhi 164.73 \\
    \midrule
    \multicolumn{9}{l}{\textit{Tai-Chi-HD}} \\
    & Full Seq. & 47.48 & 94.42 & 54.93 & 99.26 & 92.18 & 18.61 & 27.30 \\
    & Autoreg. & 47.15 &  95.93 & \cellhi 54.98 & 99.37 & 93.41 & \cellhi 21.23 & 25.30 \\
    & Flowception & \cellhi 48.42 & \cellhi 95.93 & 54.96 & \cellhi 99.67 & \cellhi 94.43 & 20.02 & \cellhi 25.21 \\
    \midrule
    \multicolumn{9}{l}{\textit{RealEstate10K}} \\
    & Full Seq. & 50.11 & 93.48 & 44.53 & 99.08 & 85.85 & \cellhi 81.64 & 26.17\\
    & Autoreg.  & 48.55 & 93.84 & 44.48 & 99.16 & \cellhi 87.29 & 72.60 & 47.48 \\
    & Flowception & \cellhi 51.18 & \cellhi 96.93 & \cellhi 48.09 & \cellhi 99.30 & 87.02 & 78.59 & \cellhi 21.80\\
    \bottomrule
    \end{tabular}}
    \label{tab:i2v_results}
    \end{SCtable*}
}
\fi

\begin{figure*}
    \centering
    \includegraphics[width=\linewidth]{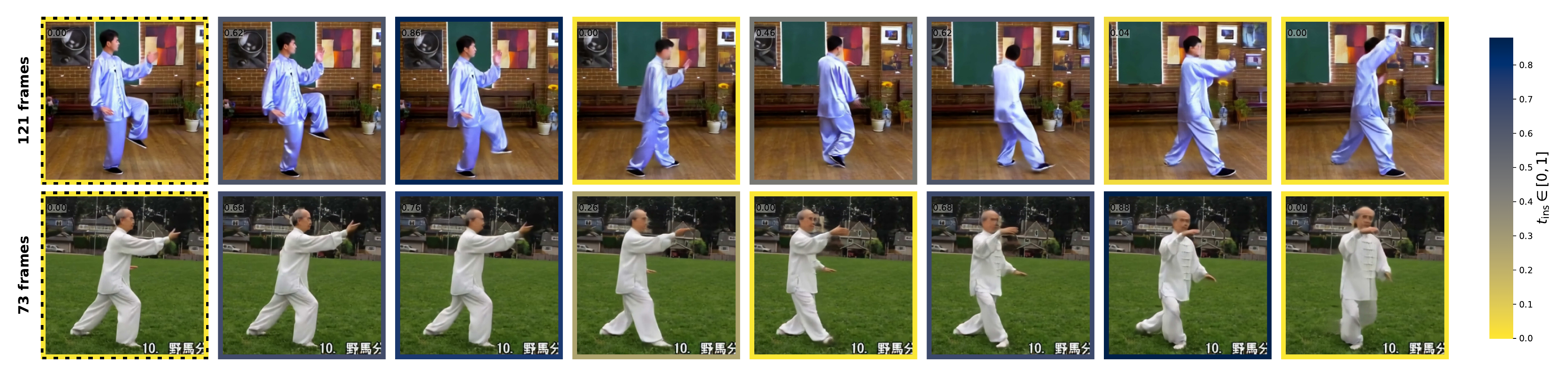}
    \caption{{\bf Image-to-video generation using model trained on Tai-Chi-HD.} We show two examples (one per row) of  generated video frames along the insertion time of each frame.
    In general,  frames inserted early, with $t\approx0$, define the movement dynamics (large changes \wrt the context frame), while later frames smoothly interpolate motion, resulting in smaller changes \wrt neighboring frames.
    }
    \label{fig:taichi_tins}
\end{figure*}

\subsection{Experimental setup}
\label{sec:setup}

\mypar{Model architecture}
Our architecture builds on the popular DiT diffusion transformer architecture~\citep{Peebles2022DiT}, with minimal changes to enable insertion rate predictions.
We equip  each attention block  with an additional learnable token which is replicated for each frame and takes part in the attention computation, ultimately each of these tokens maps to the non-negative rate prediction using an exponential activation.
Our baseline model is made of 38 attention blocks with a hidden dimension of 1536, each attention head has a dimension of 64, resulting in 24 total heads, for a model with approximately 2.1B learnable parameters.
Positional embeddings follow the VideoROPE method~\citep{wei2025videorope}.
We enable full bidirectional attention between frames and optionally concatenated text tokens in an MMDiT~\citep{sd3} fashion.
For T2V results, we finetune an open-source LTX-2b model (0.9.5)~\citep{HaCohen2024LTXVideo} with our framework after conducting the architecture changes necessary for \ours.

\mypar{Datasets}
We evaluate our model on three different datasets.
We used two narrow-domain datasets:  Tai-Chi-HD~\citep{siarohin2019firstordermotionmodel} and RealEstate10K~\citep{zhou2018stereo}, as well as the class-structured Kinetics 600 dataset~\citep{carreira2018kinetics600}. 
Unless specified otherwise, we train our models at $256$ resolution and generate up to 145 frames at 16 FPS.
Furthermore, we trained a general domain model using a proprietary dataset of around 2M videos ranging approximately from 20 to 240 seconds.
We show results for this model in the supplementary material.
We use the LTX autoencoder~\citep{HaCohen2024LTXVideo}, which has a spatial downsampling factor of 32  and temporal downsampling  factor of 8.
To correctly support variable length video decoding, we modify the decoder architecture to propagate a frame validity mask which is used to mask out invalid frames in the hidden layers, preventing border effects where padding frames leak into the video. 
When prompting models with one or more video frames to generate a video, we  choose these reference frames from the validation set.

\mypar{Metrics}
To evaluate the performance of our models, we rely on the standard FVD  metric~\citep{unterthiner2019fvd}, computed with respect to a subset of 5k videos from the training set, as well as selected metrics from  VBench~\citep{huang2024vbench}.
Specifically, we report imaging and aesthetic quality, background and subject consistency, motion smoothness and dynamic degree.

\mypar{Baselines}
We compare our method with popular video generation paradigms including 
(i) \emph{full-sequence generation} where all frames are flowed simultaneously, sharing a global timestep and  bidirectional attention over all frames, and 
(ii) \emph{per-frame autoregressive generation} where  the model iteratively predicts the next frame(s) conditioned on the previously available ones.  
For autoregressive generation, we experiment with both  bidirectional and causal attention (which allows  KV caching for efficient inference).

We sample videos from \ours and baselines without using guidance, unless specified otherwise. 
The complete videos from which we show selected frames in \Cref{fig:teaser,fig:taichi_tins,fig:kinetics_interp} are included in the supplementary material to better appreciate the video quality.

\subsection{Main results}
\label{sec:results}

\mypar{Image-to-Video (I2V) generation}
In \Cref{tab:i2v_results} we compare our \ours approach to the  full-sequence and autoregressive baselines for the I2V task where we condition on the first frame  to generate the rest of the video. 
We set the generated video length to 145 frames.
Overall, we find the autoregressive and full-sequence model to give fairly similar results. 
For the VBench metrics, \ours leads to better results in most cases, or second best otherwise. 
For video quality as measured in FVD, \ours improves results across the board.

In \Cref{fig:taichi_tins} we show two qualitative examples of I2V generation, which demonstrate the ability of the model to generate smooth video sequences with substantial motion and appearance consistency. 
Additionally, we observe an emergent ``coarse-to-fine'' structure in the order of generated frames:  early on in the generation process far-away frames are generated that tend to vary and define the overall motion of the video, while later frames mostly interpolate between them.

\mypar{Text-to-video (T2V) generation}
We train \ours on the proprietary dataset of around 2M text-video pairs described in \Cref{sec:setup}.
We use a pretrained LTX model for this experiment as stated earlier.
Our results are reported in \Cref{tab:t2v_results}, \ours consistently improves perceptual quality and temporal coherence over the full-sequence paradigm, achieving higher imaging quality, background consistency, aesthetics, motion smoothness and subject consistency while dynamic degree is only slightly deteriorated.
Qualitatively, \ours produces sharper motion and fewer identity drifts over long time horizons (Fig.~\ref{fig:teaser}).
We empirically observe similar trends to I2V where early frames define coarse motion while later inserted frames ensure smooth transitions between key frames.

\begin{table}[h]
    \centering
    \resizebox{\linewidth}{!}{
    \begin{tabular}{lcccccc}
        \toprule
         {\it CFG=7.0, 50 NFEs} & Imag.$\uparrow$ & Back.$\uparrow$ & AES$\uparrow$ & Motion$\uparrow$ & Subj.$\uparrow$ & Dyna.$\uparrow$\\
         \midrule
        LTX (original) & 49.22 & 95.13 & 48.25 & 98.49 & 92.30 & 53.90\\
        \midrule
        LTX (full-sequence) & 47.96 & 93.82 & 47.28 &98.07 & 89.47 & \cellhi 53.71\\
        LTX (\ours) & \cellhi 51.37 & \cellhi 94.95 & \cellhi 49.56 &\cellhi  98.14 & \cellhi 91.40 & 52.43\\
        \bottomrule
    \end{tabular}}
    \caption{{\bf Text-to-video generation results.} We adapt LTX-2b model and compare \ours with full-sequence training.}
    \label{tab:t2v_results}
\end{table}

\begin{figure*}
    \centering
    \includegraphics[width=\linewidth]{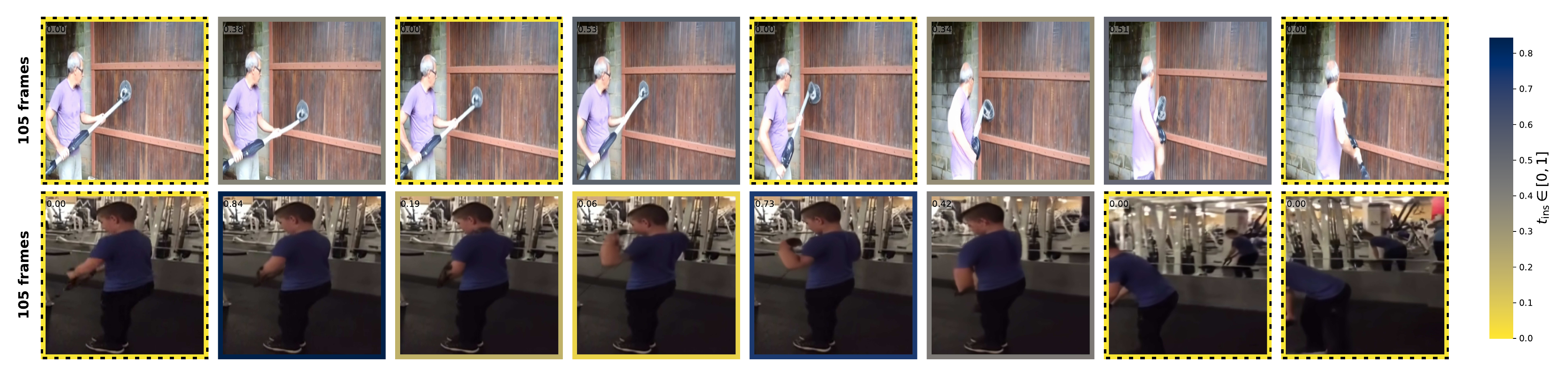}
    \caption{{\bf Frame interpolation results obtained with \ours trained on the Kinetics-600 dataset.}
    Context frames are highlighted with dashed lines, insertion times of other frames are marked using  a color map and printed on each frame. 
    }
    \label{fig:kinetics_interp}
\end{figure*}

\mypar{Frame interpolation}
Another interesting task directly supported by \ours is frame interpolation, where rather than a single starting frame, we provide an ordered  set of context frames as conditioning, and let the model  interpolate the video freely between these. 
Differently from other frameworks \citep{wan2025, HaCohen2024LTXVideo}, \ours  does not need to be given  the number of frames between successive conditioning frames, but it rather  inserts frames as it sees fit, resulting in a more flexible interpolation capability.
We illustrate this use-case in \Cref{fig:kinetics_interp}.
We observe that the model is able to adapt the number of insertions to generate coherent scenes with smooth transitions. For instance, in the second row, no frame is inserted after the penultimate context frame in order to preserve motion continuity.

\begin{table}
    \caption{{\bf Comparing insertion schemes.} Using the \ours rate prediction  and baselines that insert frames in a random order,  in a hierarchical scheme (see text for detail), and left-to-right. Models trained on RealEstate10K.
    }
    \centering
    {\scriptsize 
    \begin{tabular}{lccc}
        \toprule
        Insertion method & FVD & Motion & Dynamic\\
        \midrule
        Random          & 25.03 & 99.09 & 70.68\\
        Hierarchical   & 23.94 & 99.10 & 71.20\\
        Left-to-right  & 23.61 & 99.03 & 73.04\\
        \ours rate prediction & \cellhi 21.80 & \cellhi 99.30 & \cellhi 78.59\\
        \bottomrule
    \end{tabular}}
    \label{tab:insertion_rule}
\end{table}

\mypar{Sampling Efficiency}
We compare wall-clock  time for  sampling the same LTX-2b model used for our T2V experiment. 
We track sampling time on a single H200 GPU with different numbers of sampling steps (NFEs). \ours is consistently about 30\%  faster than the Full-Sequence baseline.

\begin{figure}[h]
    \centering
    \includegraphics[width=0.75\linewidth]{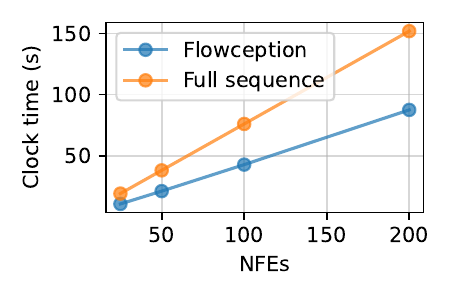}
    \caption{{\bf Sampling speed.} We compare wall-clock time when sampling videos using either full-sequence or Flowception paradigms.}
    \label{fig:placeholder}
\end{figure}

\subsection{Ablation experiments}
\label{sec:ablations}

\mypar{Impact of learned insertions}
To assess the importance of the insertion head that learns to predict where to insert frames, we compare it to inserting frames in a data-independent manner.
In particular, we first determine the video length $n$. 
We  then consider inserting frames in a random order, using a ``hierarchical'' scheme where we iteratively insert them in the middle of the largest interval in $\{1,\dots,n\}$ where no frame has been inserted yet, or using a left-to-right pattern  as used in AR models.
We present the results of this experiment in \Cref{tab:insertion_rule}.
We find that across metrics, results improve as we move from the random pattern to the hierarchical pattern, to the left-to-right, and finally to the data-driven insertion patterns in \ours. 

Note that for the left-to-right pattern, the FVD results in \Cref{tab:insertion_rule} (23.61) are much better than the FVD for the AR baselines in \Cref{tab:autoregressive_comp} (47.48 and 45.13).
We hypothesize that this difference is due to the fact that the AR baselines  denoise the next frame once previous ones are already fully denoised and committed to, while in \ours new frames start denoising while previous ones are still not fully denoised (more similar to the full-sequence baseline which yields a FVD of 26.17 on RealEstate10K).

\mypar{Insertion guidance}
Classifier-free guidance (CFG) is commonly used to achieve better prompt alignment and image quality in diffusion and flow models~\citep{ho2021classifierfree}.
Similarly, in \ours,  we can perform CFG on the insertion rates. 
Following CFG,  we randomly  drop the conditional information $c$ during training, and define the guided update as
\begin{equation}
    \lambda^\text{cfg}(X_t | c) = \lambda(X_t | c)^{w_s} \lambda(X_t)^{1-{w_s}},
\end{equation}
where $w_s\geq1$ is the guidance scale for the insertion rate.
\Cref{fig:sinsert_impact} presents a scatter plot comparing video lengths for I2V obtained with and without guidance, initiating generation from the same seed for each  starting frame. For efficiency we limit the number of frames to 20 in this experiment. 
The result clearly indicates that using rate guidance ($w_s\!=\!5$) biases the model towards generating longer videos.

\ifcvpr
{
    \begin{figure}
        \centering
        \includegraphics[width=.6\linewidth]{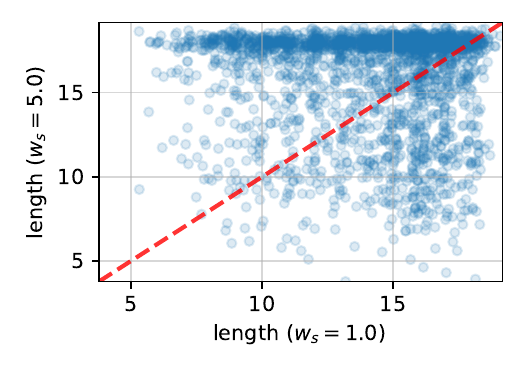}
        \caption{{\bf Impact of rate guidance on I2V video length.} Samples obtained  without guidance ($w_s\!=\!1$, horizontal) and with guidance ($w_s\!=\!5$, vertical) on RealEstate10K, using the same seed for each conditioning frame.
        }
        \label{fig:sinsert_impact}
    \end{figure}
}
\else
{
    \begin{figure}
        \centering
        \includegraphics[width=.65\linewidth]{figures/s_ins_length.pdf}
        \caption{{\bf Impact of rate guidance on I2V video length.} Samples obtained  without guidance ($w_s\!=\!1$, horizontal) and with guidance ($w_s\!=\!5$, vertical) on RealEstate10K, using the same seed for each conditioning frame.
        }
        \label{fig:sinsert_impact}
    \end{figure}
}
\fi

Without guidance, or using lower values,  we find insertion guidance to prevent problems of under-insertions which can result in a choppy transition between two frames.
This is confirmed in \Cref{tab:rate_cfg_smoothness} where motion smoothness increases while dynamic degree decreases as $w_s$ increases.
\begin{table}[h]
    \caption{{\bf Effect of rate guidance on motion smoothness.} We report dynamic degree and motion smoothness for various values of the insertion rate guidance parameter $w_s$.
    }
    \centering
    \scriptsize{
    \begin{tabular}{lccc}
        \toprule
        $w_s$ & FVD & Motion & Dynamic\\
        \midrule
        1.0 & \cellhi 21.80 & 99.30 & 78.59\\
        2.0 & 22.69 & 99.31 & \cellhi 78.61\\
        5.0 & 25.30 & \cellhi 99.33 & 77.78\\
        \bottomrule
    \end{tabular}}
    \label{tab:rate_cfg_smoothness}
\end{table}

\mypar{Causality in AR models}
In the  literature, most autoregressive video generation models make use of frame-wise causal attention~\citep{deng2024nova, chen2025diffusion}, which allows for KV caching  and therefore efficient inference for autoregressive generation.
Other works improve on quality and long term rollout consistency by using bidirectional attention in the context and applying frame-wise noising for each frame~\citep{song2025historyguidedvideodiffusion}.
Such an approach, while effective, significantly raises inference time expected compute from $O(L^2n^2)$ to $O(L^2 n^3)$ where $n$ is the number of video frames, and $L$ is the number of tokens per frame. 
A more detailed analysis of FLOPs is provided in the supplementary material.

We provide a comparison between both autoregressive methods in \Cref{tab:autoregressive_comp}. 
The results demonstrate slightly better performance of the non-causal attention pattern, outperforming the causal one on every metric.  
\ours yields further improvements over the non-causal AR baseline on all metrics but subject consistency. 

\begin{table}[h]
    \caption{{\bf Comparison of autoregressive attention patterns.} Autoregressive baseline with different attention masks and comparison to \ours in terms of VBench metrics and FVD. Models trained on RealEstate10K.}
    \centering
    \resizebox{\linewidth}{!}{
    \begin{tabular}{lccc}
    \toprule
    & AR causal & AR non-causal & \ours \\
    \midrule
    FVD & 47.48 & 45.13 & \cellhi 21.80\\
    Imaging quality & 48.55 & 48.70 & \cellhi 51.18\\
    Background consistency & 93.84 & 93.88 & \cellhi 96.93\\
    Aesthetic  quality & 44.48 & 45.28 & \cellhi 48.09\\
    Motion smoothness & 99.16 & 99.20 & \cellhi 99.30\\
    Subject consistency & 87.29 & \cellhi 87.46 & 87.02\\
    Dynamic degree& 72.60 & 73.52 & \cellhi 78.59\\
     \bottomrule
    \end{tabular}}
    \label{tab:autoregressive_comp}
\end{table}

\mypar{Local \vs global attention}
\ours generates a video by progressively inserting new frames.
Empirically, we observe that  \ours tends to insert distant frames first, and then progressively ``fills in the blanks''. 
This suggests that \ours could be more amenable to using efficient local attention patterns, as far-away frames can still attend to each other early in the denoising process when the sequence is still short. 
To assess whether \ours is more amenable to using local attention windows than the full-sequence baseline, we compare  models using global attention and using local attention windows restricted to the  previous and next $K$ frames in the sequence. 
The results in \Cref{fig:local_attn_fvd} indicate that \ours suffers far less from using local attention windows than the full-sequence model, presumably due the communication between distant frames early in the \ours denoising process, when the intermediate frames have not yet been inserted, even when using small attention windows.

\begin{figure}[t]
    \centering
    \includegraphics[width=.6\linewidth]{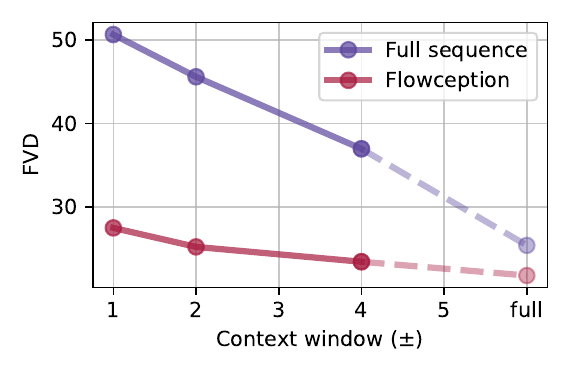}
    \caption{{\bf Comparing local attention variants.} We report FVD on RealEstate10K after training for 300k iterations with different attention context windows. For each frame, its context is made of itself and the previous/next context window frames.
    }
    \label{fig:local_attn_fvd}
\end{figure}
\section{Conclusion}

We presented \ours, a novel video generation model that interleaves denoising existing frames with inserting new ones at appropriate locations, enabling efficient variable-length non-autoregressive generation.
Our method outperforms full sequence and autoregressive baselines in both generation quality and training efficiency. 
\ours naturally supports various prediction tasks including image-to-video, video-to-video generation, and frame interpolation.
Our work offers a promising alternative to standard autoregressive and full-sequence approaches for long-term generation and flexible editing.
While training on partial sequences allows for these emergent behaviors, it doubles the number of iterations in order to ensure complete denoising of all frames. Exploring different interleaved schedules for improved efficiency is an exciting future direction.

\mypar{Acknowledgements}
Karteek Alahari was supported in part by the Institute of Information \& Communications Technology Planning \& Evaluation (IITP) grant funded by the Korean Government (MSIT) (No.\ RS-2024-00457882, National AI Research Lab Project).
\clearpage

\bibliographystyle{assets/plainnat}
\bibliography{paper}

\clearpage
\newpage
\beginappendix

\section{Full derivations}
\label{app:flowception_full_derivation}

In this section, we provide the derivations for our model.
We start with a brief summarization of the Edit Flows framework in video space before deriving the interleaved time schedule for concurrent frame insertions and denoising and training losses.

\subsection{Edit flows and frame insertions}

\mypar{Setup}
As explained in the main manuscript, we model videos as sequences of frames from the space $\mathcal{X}$.
We use a blank token $\blank$ to mark empty positions in a sequence of videos.
Let $\mathcal{Z}=\bigcup_{n=0}^{N}(\mathcal{X}\cup\{\blank\})^n$ define the space were the (augmented) videos live and 
$\strip:\mathcal{Z}\to\mathcal{X}$ the mapping from augmented to observable space where $\strip$ removes all blanks (i.e $\X_t = \strip(\Z_t)$), and define the product delta on sequences
$\delta_{z_1}(z_2)=\prod_i\delta_{z_1^i}(z_2^i)$.

Under this parameterization, a sample $\Z_0 \in \mathcal{Z}$ in augmented space is a series of noise frames interleaved with blank tokens at random locations, as illustrated in \Cref{fig:flowception_coupling}.

\mypar{Conditional probability path}
We prescribe a coupling between source and target distributions.
We use the standard independent coupling where each clean frame is paired with an
independent Gaussian noise frame. 
Concretely, for the source we take
\[
  X_0 \sim \prod_{i=1}^k \mathcal{N}(X^i_0; 0, I),
\]
 and use an augmented variable $Z_t \in (\mathcal{X}\cup\{\blank\})^n$
to model masked insertions. At $t=0$, we start from the all-blank sequence
$Z_0 = (\blank, \dots, \blank)$ and gradually reveal the clean frames $X_1$
according to the scheduler $\kappa_t$. 

Given $\X_1\sim p_{\text{data}}$ a video with $n$ frames, we define a conditional masked path over $\Z_t\in(\mathcal{X}\cup\{\blank\})^n$ interpolating between $\X_0 \in \mathcal{N}(0,I)^k$ (with $k\leq n$) and $\X_1$ where transitions from blank frames to real frames follow the probability law:
\begin{align}
p_t(\X_t,\Z_t \mid \X_1)
  &= p_t(\X_t\mid \Z_t)\,p_t(\Z_t\mid \Z_1)
  \label{eq:flowception_cond_pt}\\
= \delta_{\strip(\Z_t)}(\X_t)\,
   & \prod_{i=1}^{n}\Big[(1-\kappa_t)\,\delta_{\blank}(\Z_t^i)
   + \kappa_t\,\delta_{\X_1^i}(\Z_t^i)\Big] \notag
\end{align}
with $\kappa_0=0,\ \kappa_1=1$. Each token in $\Z_t$ is blank with probability $1-\kappa_t$ or equals $\X_1^i$ with probability $\kappa_t$.

\begin{figure}
    \centering
\includegraphics[width=.45\textwidth]{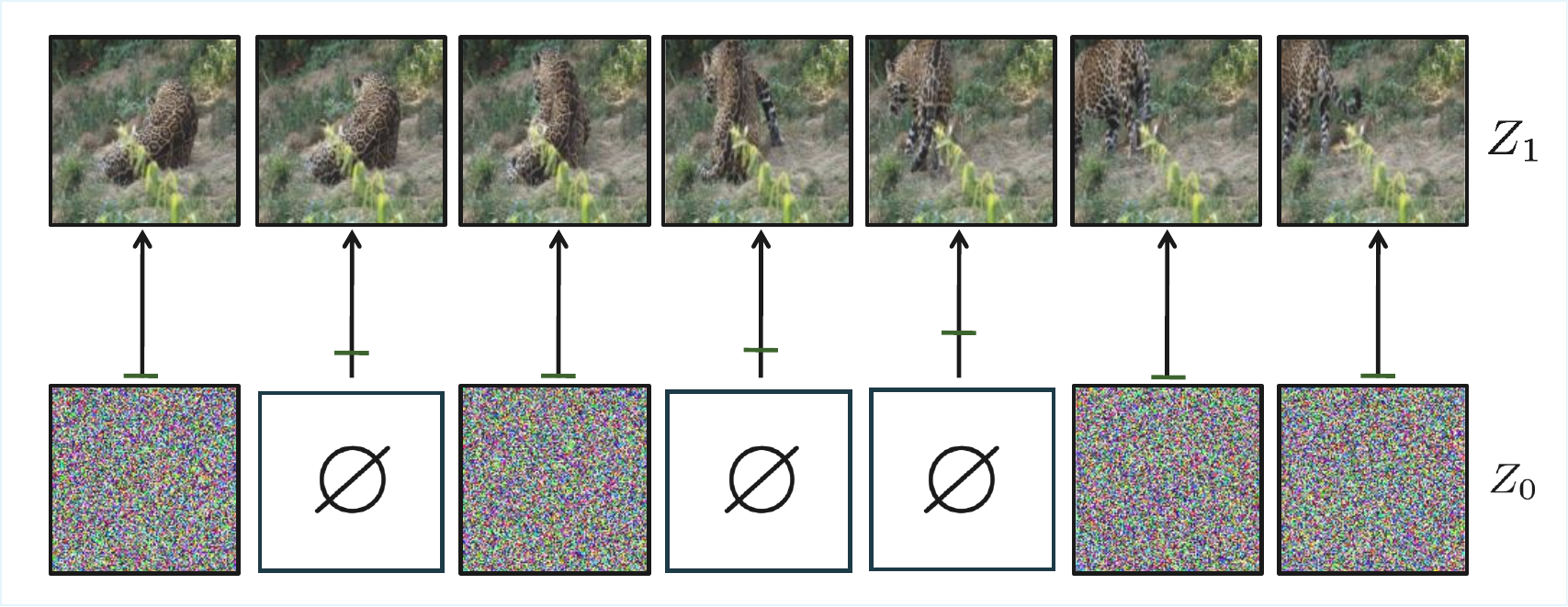}
    \caption{{\bf Illustration of the coupling between source and target distributions in augmented space $\mathcal{Z}$.} Starting frames are initialized as noise vectors $\epsilon \sim \mathcal{N}(0,I)$ while others are blank tokens in augmented space which are transformed into noise vectors at their corresponding insertion time.
    Horizontal lines on the arrows indicate the time when a frame is revealed in the schedule.
    }
    \label{fig:flowception_coupling}
\end{figure}

\mypar{Continuous-time Markov chain (CTMC)} We describe the
binary reveal process for frame insertions by a CTMC in
augmented space. 
As described previously, let $Z_t \in \mathcal{Z}$ denote the
augmented sequence with blanks, and $X_t = f_{\mathrm{strip}}(Z_t)$
its observable subsequence. The CTMC acts only on the
discrete reveal decisions in $Z_t$; continuous evolution of
frame contents is handled separately by the flow-matching
ODE which we develop later on.

Marginally over $Z_t$, the induced evolution on observable
sequences has the infinitesimal transition kernel
\begin{equation}
  \mathbb{P}(X_{t+h} \mid X_t)
  = \delta_{X_t}(X_{t+h})
    + h\,u_t(X_{t+h} \mid X_t) + o(h), \tag{15}
\end{equation}
where $u_t$ is the marginal insertion rate obtained from the
underlying CTMC on $(X_t,Z_t)$.

\mypar{Conditional CTMC rate}
As demonstrated in \citep{havasi2025edit}, a conditional CTMC that samples from \eqref{eq:flowception_cond_pt} can be written
\begin{align}\label{eq:flowception_cond_ut}
    u_t(x, z \mid \X_t, \Z_t, \X_1) & =\\
    \bigg( \sum_{i=1}^{n} \frac{\dot{\kappa}_t}{1-\kappa_t}&\big[\delta_{\X_1^i}(z^i) -\delta_{\Z_t^i}(z^i)\big]\bigg)
    \delta_{\strip(z)}(x), \notag
\end{align}
where $x=\ins(\X_t,i,a)$ for some  $i\in [n], a\in\mathcal{X}$.
This gives the infinitesimal probability shift from $(\X_t,\Z_t)$ to $(x,z)$, restricted to next states $x$ that differ by \emph{one insertion} at most.
Intuitively, any masked position $i$ transitions from $\blank$ to $\X_1^i$ with a rate $\dot\kappa_t/(1-\kappa_t)$, ensuring that a sample started at $Z_0{=}[\,\blank,\dots,\blank\,]$ reaches $\X_1$ as $t\to 1$.

\mypar{Training loss}
We train a model that transports sequences via insertions,
$u_t^\theta(x\mid \X_t)$, where $x=\ins(\X_t,i,\epsilon)$ for some $i,\ \epsilon$,
by marginalizing the auxiliary $Z_t$ and the data $\X_1$.

As shown by \cite{havasi2025edit}, the marginalized ground-truth rate $\bar u_t(\cdot\mid \X_t)=\sum_z \E_{p_t(z_t\mid \X_t)}\,u_t(\cdot,z\mid \X_t,z_t,\X_1)$ \emph{generates} $p_t(\X_t)$, any Bregman divergence $D_\phi(a,b)=\phi(a)-\phi(b)-\langle a-b,\nabla\phi(b)\rangle$ can be used to regress the marginal rate.
Following \cite{havasi2025edit,nguyen2025oneflowconcurrentmixedmodalinterleaved,holderrieth2024generator}, we use a Bregman divergence between measures over next states and marginalize over all $z$ such that $x=\strip(z)$:
\begin{multline}\label{eq:flowception_aux_bregman}
    \E_{\X_1\sim p_{\text{data}}} E_{(\X_t,\Z_t)\sim p_t(\X_t,\Z_t\mid \X_1)}\\
    \ D_\phi\!\left(\sum_{z} u_t(\cdot, z \mid \X_t,\Z_t,\X_1), u_t^\theta(\cdot \mid \X_t)\right).
\end{multline}
Choosing the entropy as a potential function $\phi(u)=\langle u,\log u\rangle$ gives the explicit loss (see \Cref{prop:bregman} for the derivation of this term)
\begin{align}
\mathcal{L}
&= \E_{t,\X_1,\X_t,\Z_t}\Bigg[
    \sum_{\mathclap{x\neq \X_t}} u_t^\theta(x \mid \X_t)
\label{eq:flowception_original_loss}\\[-2pt]
& -
    \sum_{i=1}^{n}
    \mathbf{1}(\Z_t^i=\blank)\,
    \frac{\dot{\kappa}_t}{1-\kappa_t}\,
    \log u_t^\theta\!\left(
        \ins\!\left(\X_t, j, \X_1^i\right) | \X_t
    \right) \Bigg], \notag
\end{align}
where $j$ is the slot in $\X_t$ corresponding to the first non-$\blank$ coordinate to the left of $Z_t^i$. 
This alignment ensures that inserting at position $i$ corresponds to changing $\Z_t^i:\blank \to \X_1^i$.

\mypar{Loss simplification}
Following  \cite{nguyen2025oneflowconcurrentmixedmodalinterleaved}, we adopt
an equivalent $t$-independent parameterization for one-insertion moves.
Rather than modeling separate rates for each individual missing frame, we
define a single slot-level insertion rate. For a one-insertion move
$x = \ins(\X_t,i,\epsilon)$ (inserting a fresh noise frame $\epsilon$ at
slot $i$), we write
\begin{equation}
    u_t^\theta\big(\ins(\X_t,i,\epsilon)\mid \X_t\big)
    =
    \frac{\dot{\kappa}_t}{1-\kappa_t}\,\lambda^i(\X_t),
\end{equation}
where $\lambda^i(\X_t)\ge 0$ is the \emph{total insertion rate} at slot
$i$. The actual frame content is always sampled from a fixed noise prior
(\eg $\epsilon \sim \mathcal{N}(0,I)$) and is not parameterized.

Let $\mathcal{A}_j$ denote the set of missing frames associated with
slot $j$, \ie those to be inserted ``to the right'' of $j$ at time $t$ under
the alignment. 
Substituting this parameterization into
\Cref{eq:flowception_original_loss} and collecting the
$\theta$-dependent terms yields
\ifcvpr
{
    \begin{align}
    \mathcal{L}(\theta)
    &=
     \E_{(\cdot)}\!\frac{\dot{\kappa}_t}{1-\kappa_t}\!
       \sum_{j=1}^{\ell(\X_t)}\left(\! \lambda^j(\X_t)\!
       -\!\sum_{a\in\mathcal{A}_j}
       \log\!\lambda^j(\X_t)
     \right)
    \nonumber \\
    &=
     \E_{(\cdot)}\!\frac{\dot{\kappa}_t}{1-\kappa_t}\!
     \sum_{j=1}^{\ell(\X_t)}
     \Big(
       \underbrace{\lambda^j(\X_t)-|\mathcal{A}_j|\log \lambda^j(\X_t)}_{\text{Poisson NLL}}
     \Big)
     + \text{const.}
    \end{align}
}
\else
{ 
    \begin{align}
    & \mathcal{L}(\theta)
    =
     \E_{(\cdot)}\!\frac{\dot{\kappa}_t}{1-\kappa_t}\!
       \sum_{j=1}^{\ell(\X_t)}\left(\! \lambda^j(\X_t)\!
       -\!\sum_{a\in\mathcal{A}_j}
       \log\!\lambda^j(\X_t)
     \right)
    \nonumber \\
    &=
     \E_{(\cdot)}\!\frac{\dot{\kappa}_t}{1-\kappa_t}\!
     \sum_{j=1}^{\ell(\X_t)}
     \Big(
       \underbrace{\lambda^j(\X_t)-|\mathcal{A}_j|\log \lambda^j(\X_t)}_{\text{Poisson NLL}}
     \Big)
     + \text{cte.}
    \end{align}
}
\fi
where the expectation is over
$\tau, \X_1\sim p_{\text{data}}$ and $(\X_t,\Z_t)\sim p_t(\X_t,\Z_t\mid \X_1)$, while $t = \text{clip}(\tau, 0, 1)$.
As pointed out by \cite{havasi2025edit}, keeping the factor
$\frac{\dot{\kappa}_t}{1-\kappa_t}$ preserves a direct ELBO
interpretation, while removing it can be more stable in practice; both
choices recover the familiar Poisson loss over insertion counts.
So in practice, the loss we use is
\begin{equation}
\label{eq:final_insertion_loss}
\mathcal{L}_{\mathrm{ins}}(\theta)
=
\mathbb{E}_{(\cdot)}
\Bigg[
  \sum_{j=1}^{\ell(\X_t)}
  \big(
    \lambda^j(\X_t) - |\mathcal{A}_j|\log \lambda^j(\X_t)
  \big)
\Bigg].
\end{equation}

\subsection{Velocity flow-matching objective}
Now we have the loss for the insertion term (where to insert and with with what probability), now we need to model how to update the currently active frames.
We briefly recall the derivation of the velocity loss
used for denoising active frames.

Following rectified flow matching, for $t_i \in [0,1]$, each frame $i$ follows a a linear probability path $t_i$
\begin{equation}
    \X^i_{t_i} = t_i \X^i_1 + (1-t_i) \X^i_0 \sim p^i_t.
\end{equation}
Consider a rectified flow coupling between source
$\X_0 \sim p_0$ and target $\X_1 \sim p_1$ governed by the
ODE
\begin{equation}
  \frac{d\X_t}{dt} = v(\X_t, t),
  \qquad t \in [0,1].
\end{equation}
Under this coupling, the population-optimal velocity field
satisfies
\begin{equation}
  v^\star(\X_t,t)
  =
  \mathbb{E}\left[ \X_1 - \X_0 \,\bigm|\,
    \X_t, t \right].
\end{equation}
The Conditional Flow Matching (CFM) objective then trains a neural
velocity field $v_\theta$ to regress onto this target:
\begin{equation} \label{eq:vel-loss-cfm}
  \mathcal{L}_{\mathrm{vel}}
  =
  \mathbb{E}_{\tau,X_0,X_1}
  \Bigl[
    \mathbf{1}_{[0,1)}(\tau)\,
    \bigl\|
      v_\theta(\X_t,t)
      - (\X_1 - \X_0)
    \bigr\|^2
  \Bigr],
\end{equation}
where $t = \text{clip}(\tau,0,1)$.
As shown by
\cite{lipman2023flow}, this CFM loss has the same
optimum as the original Flow Matching objective and is minimized
uniquely by $v_\theta = v^\star$ in function space.

In Flowception we apply this objective at the \emph{frame} level.
Each frame $i$ has a local time $t_i \in [0,1)$ induced by the
extended-time construction described above, and we write
$\X^i_{t_i}$ for its state along the linear path. The loss
in \Cref{eq:vel-loss-cfm} is evaluated only on \emph{active} frames,
\ie those with $\tau_i \in [0,1]$, and we mask out both frozen
frames ($\tau_i < 0$) and terminal frames ($\tau_i \geq 1$) during
training and sampling.

\subsection{Interleaved time schedule for frame insertions}
\label{app:interleaved_schedule}

We now derive the interleaved schedule used to concurrently insert new frames and denoise existing frames, ensuring that training and sampling observe the same joint law over local times.

\mypar{Design choice}
At the instant of insertion we can either (i) fully or partially denoise the frame, or (ii) insert pure noise and denoise afterwards. 
We adopt (ii) for concurrency and parallelism: a single forward pass handles both velocity prediction for present frames and insertion decisions, while newly inserted frames start at local time $0$ and are denoised in context thereafter.

Since the source $p_\text{src} = \mathcal{N}(0, I)$ and target $p_\text{data}$ distributions of the data are usually decoupled,  any sample from $p_\text{src}$ is valid for the inserted frame.

\mypar{Global and local times}
Let $t_g\in[0,1]$ denote the \emph{sequence} (global) time that advances monotonically during generation.
Each frame $i$ has a local time $t_i\in[0,1]$ (used by the rectified flow coupling).
We must respect the causal constraint that a frame cannot be more denoised than the sequence has progressed, i.e.,
$t_g \ge t_i$ for all frames $i$ currently present.

\mypar{Insertion-time law and inverse-CDF sampling}
Let $\kappa:[0,1]\to[0,1]$ be the monotone reveal scheduler with $\kappa_0=0,\,\kappa_1=1$, and define the hazard $\rho_\kappa(t)=\dot{\kappa}_t/(1-\kappa_t)$.
We model insertion times by the density $p(t_{\mathrm{ins}})=\dot{\kappa}_t$, 
 equivalently 
$t_{\mathrm{ins}}=\kappa^{-1}(u),\ \ u\sim\mathrm{Unif}(0,1)$.

For a frame inserted at sequence time $t_g$, we set its local time to zero: $t_i \leftarrow 0$.
Hence the instantaneous offset between the global and local times is distributed as
\begin{equation}
t_g-t_i = t_{\mathrm{ins}}, \qquad 0\le t_g,t_i,t_{\mathrm{ins}}\le 1,
\label{eq:offset_relation}
\end{equation}
so that local time always lags behind global time by a random, scheduler-consistent delay.

\mypar{Multi-frame generalization}
To make \Cref{eq:offset_relation} hold during training for all frames, we lift time to an extended interval and tie each frame to an independent offset.
Let
\begin{equation}
\tau_{g}\in[0,2], 
\qquad
t_g=\clip(\tau_g),
\qquad
t = \clip(\tau, 0, 1).
\label{eq:extended_global}
\end{equation}
For every potential frame index $i$, draw $u_i\sim\mathrm{Unif}(0,1)$ and define the \emph{extended local time}
\begin{equation}
\tau_i = \tau_g - \kappa^{-1}(u_i),
\qquad
t_i = \clip(\tau_i).
\label{eq:extended_local}
\end{equation}
This yields three phases per frame:
\begin{itemize}
    \item (frozen): $\tau_i<0$
    \item (flowing): $\tau_i\in[0,1]$
    \item (terminal): $\tau_i>1$
\end{itemize}

Let $A=\{i:\tau_i\in[0,1]\}$ denote the active set $\tau_i\in[0,1)$. 
During training we sample $(\tau_g,\{u_i\})$, form $(t_g,\{t_i\})$, delete frames with $\tau_i<0$, and apply the flow matching loss only to indices in $A$.
By construction, the marginal law of $(t_g,\{t_i\})$ exactly matches that encountered at sampling, thus samples remain in-distribution.

\section{Additional derivations and remarks}
\subsection{Deriving the insertion rate}
Let $\kappa:[0,1]\!\to\![0,1]$ be a nondecreasing insertion schedule with $\kappa(0)=0$, $\kappa(1)=1$,
differentiable almost everywhere. For a single frame, define its reveal time $T$ with cumulative density function (CDF)
$F_T(t)=\mathbb P[T\le t]=\kappa(t)\ \text{on }[0,1]$ and survival $S(t)=1-F_T(t)=1-\kappa(t)$.

\begin{lem}[Instantaneous reveal hazard]
\label{lem:hazard}
The instantaneous reveal hazard for insertion schedule $\kappa$ is given by
\begin{equation}
\begin{aligned} \label{eq:hazard_lim}
\rho_\kappa(t) &= \lim_{\Delta\to 0^+}
       \frac{\mathbb P \big(T \in [t, t + \Delta) \mid T > t \big)}{\Delta} \\
&= \frac{\dot\kappa(t)}{1-\kappa(t)}, 
\end{aligned}
\end{equation}
for $t\in(0,1)$ 
and $\rho_\kappa(t)=0$ outside $(0,1)$.
\end{lem}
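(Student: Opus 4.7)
The statement is the standard identity relating the hazard rate of a distribution to its CDF and density, specialized to the reveal time $T$ whose CDF equals $\kappa$ on $[0,1]$. The plan is to unfold the conditional probability in the limit definition, match its numerator and denominator to $\kappa$, and then differentiate.

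First I would expand the conditional probability using Bayes' rule. Since $\Delta>0$, the event $\{T\in[t,t+\Delta)\}\cap\{T>t\}$ equals $\{T\in(t,t+\Delta)\}$, and because $\kappa$ is continuous (being differentiable a.e.\ with $F_T=\kappa$), the atom at $t$ has zero mass, so
\begin{equation*}
\mathbb{P}\bigl(T\in[t,t+\Delta)\,\bigm|\,T>t\bigr)
= \frac{F_T(t+\Delta)-F_T(t)}{1-F_T(t)}
= \frac{\kappa(t+\Delta)-\kappa(t)}{1-\kappa(t)}.
\end{equation*}
Dividing by $\Delta$ and sending $\Delta\to 0^+$, the denominator is a constant in $\Delta$ (and is strictly positive for $t\in(0,1)$ since $\kappa$ is nondecreasing with $\kappa(1)=1$ only at the endpoint), so the limit reduces to a right-derivative of $\kappa$ at $t$. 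By the differentiability hypothesis this right-derivative equals $\dot\kappa(t)$, yielding the claimed formula $\rho_\kappa(t)=\dot\kappa(t)/(1-\kappa(t))$.

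For the boundary cases, I would treat them separately. For $t<0$, since $T\in[0,1]$ almost surely, $F_T(t)=0$ and for small enough $\Delta$ we have $t+\Delta\le 0$, so the numerator vanishes identically, giving $\rho_\kappa(t)=0$. For $t>1$, $\mathbb P(T>t)=0$ and the conditional probability is defined only by convention; extending $\kappa$ by $1$ beyond $t=1$ makes $\dot\kappa(t)=0$ so the natural convention matches the definition. The mildly delicate point, and the only one I would flag as requiring care, is the strict positivity of $1-\kappa(t)$ on $(0,1)$: this follows because $\kappa$ is nondecreasing with $\kappa(1)=1$, so $\kappa(t)=1$ for some $t<1$ would contradict the intended reveal law (alternatively one restricts the statement to the support of $T$). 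All the rest is a direct application of the definition of a derivative, so I do not anticipate any genuine obstacle in carrying this through.
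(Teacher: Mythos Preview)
Your proposal is correct and follows essentially the same route as the paper: expand the conditional probability as $\frac{\kappa(t+\Delta)-\kappa(t)}{1-\kappa(t)}$ via the definition of conditional probability, then pass to the limit to recover $\dot\kappa(t)/(1-\kappa(t))$, with the boundary cases handled by noting that $F_T$ is constant outside $(0,1)$. Your additional remarks on the positivity of $1-\kappa(t)$ and the atom-free continuity are more explicit than the paper's version but do not change the argument.
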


\begin{proof}
By definition, for $\left[t, t+\Delta \right) \subset [0,1)$, we have
\begin{equation}
\begin{aligned}
\mathbb P(T\in[t, t + \Delta)\mid T > t) &= \frac{\mathbb{P}(T \in [t, t+ \Delta), T>t)}{\mathbb{P}(T>t)}\\
& = \frac{F_T(t + \Delta) - F_T(t)}{S(t)}\\
& = \frac{\kappa(t + \Delta) - \kappa(t)}{1 - \kappa(t)}.
\end{aligned}    
\end{equation}
Plugging this in \Cref{eq:hazard_lim} results in $\rho_\kappa(t)=\frac{\dot\kappa(t)}{1-\kappa(t)}$. 
For $t\notin(0,1)$, $F_T$ is constant, hence $\rho_\kappa(t)=0$.
\end{proof}

\subsection{From Bregman divergence to the insertion loss} \label{sec:ins_loss_derivation}
Let $\varphi(z)=z\log z - z$ on $\mathbb{R}_{\ge 0}$ and
\(
D_\varphi(u\Vert \lambda)=\sum_j \{u_j\log(u_j/\lambda_j)+\lambda_j-u_j\}.
\)

\begin{prop}[Bregman objective]
\label{prop:bregman}
For a fixed snapshot,
\[
\sum_j\big[\lambda_j - u_j\log \lambda_j\big]
\quad\text{and}\quad
D_\varphi(u\Vert \lambda)
\]
differ by a constant in $u$; thus they have the same minimizers. The training objective
\[
\mathcal{L}_{\mathrm{ins}}(\theta)=\mathbb{E}_{(\cdot)}\Big[\sum_j \lambda_{t,j}(X_t) - u_{t,j}(X_t)\log \lambda_{t,j}(X_t)\Big]
\]
is therefore a valid form to learn that converges towards the marginal expectation while only the conditional expectation is used.
\end{prop}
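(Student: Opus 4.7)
The plan is to verify the two stated quantities differ only by a $\lambda$-independent term, and then to invoke the standard Bregman-minimizer-is-conditional-expectation fact to justify the final claim. I would proceed as follows.

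First, I would compute $D_\varphi$ from its definition. With $\varphi(z)=z\log z - z$, we have $\varphi'(z)=\log z$, so for each coordinate
\begin{equation*}
D_\varphi(u_j,\lambda_j) = \varphi(u_j)-\varphi(\lambda_j)-\varphi'(\lambda_j)(u_j-\lambda_j),
\end{equation*}
which after simplification gives $u_j\log u_j - u_j + \lambda_j - u_j\log\lambda_j$. Summing across coordinates,
\begin{equation*}
D_\varphi(u\Vert\lambda) = \sum_j\big[\lambda_j - u_j\log\lambda_j\big] + \sum_j\big[u_j\log u_j - u_j\big].
\end{equation*}
The second sum is a function of $u$ alone; treating it as a fixed offset when $\lambda$ varies establishes that the two quantities coincide up to a $\lambda$-free additive term, hence share the same argmin in $\lambda$ pointwise.

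Next, I would lift this pointwise identity to the expectation level, which is what \(\mathcal{L}_{\mathrm{ins}}(\theta)\) is. Substituting $\lambda=\lambda^\theta_{t,j}(X_t)$ and $u=u_{t,j}(X_t)$ and taking $\mathbb{E}_{(\cdot)}$, the $u$-only term becomes a $\theta$-independent constant, so minimizing $\mathcal{L}_{\mathrm{ins}}(\theta)$ is equivalent to minimizing $\mathbb{E}\,D_\varphi(u_t\Vert\lambda^\theta_t)$.

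Finally, for the ``converges towards the marginal expectation while only the conditional expectation is used'' clause, I would invoke the defining property of Bregman divergences generated by strictly convex $\varphi$: for any random variables $U,X$ with $\mathbb{E}\|U\|<\infty$, the unique minimizer of $\lambda\mapsto\mathbb{E}[D_\varphi(U\Vert\lambda(X))]$ over measurable $\lambda(\cdot)$ is $\lambda^\star(X)=\mathbb{E}[U\mid X]$. Applying this with $U=u_t(\X_t)$ (the ground-truth, potentially $\X_1$-dependent rate) and conditioning variable $\X_t$ shows that the optimizer of $\mathcal{L}_{\mathrm{ins}}$ is precisely the marginal rate $\bar u_t(\cdot\mid \X_t)=\mathbb{E}_{p_t(\X_1\mid \X_t)}[u_t(\cdot\mid \X_t,\X_1)]$, even though each training sample only supplies the conditional target. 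I expect the only mild subtlety is justifying measurability/integrability so that the conditional-expectation characterization applies coordinatewise to the vector $(\lambda_j)_{j=1}^{\ell(\X_t)}$; the rest is a direct algebraic expansion.
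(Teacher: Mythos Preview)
Your proposal is correct and matches the paper's own proof almost exactly: both expand $D_\varphi$ coordinatewise using $\varphi'(z)=\log z$, collect terms into $\sum_j[\lambda_j-u_j\log\lambda_j]$ plus the $\lambda$-free remainder $\sum_j[u_j\log u_j-u_j]$, and then pass to the expectation. Your added step invoking the ``Bregman minimizer equals conditional expectation'' property to justify the marginal-versus-conditional clause is more explicit than the paper, which leaves that point implicit and instead specializes immediately to the concrete form $u_{t,j}(X_t)=\rho_\kappa(t)K_j(X_t)\mathbf{1}_{[0,1]}(t)$.
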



\begin{cor}[Pointwise optimum]
\label{cor:pointwise}
The per-snapshot objective in Prop.~\ref{prop:bregman} is strictly convex in each $\lambda_j>0$ and is minimized uniquely at $\lambda_j=u_{t,j}$.
\end{cor}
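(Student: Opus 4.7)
The plan is to exploit full separability: the per-snapshot objective
$f(\lambda) = \sum_j \bigl[\lambda_j - u_{t,j}\log \lambda_j\bigr]$
decouples into a sum of one-dimensional functions
$f_j(\lambda_j) = \lambda_j - u_{t,j}\log \lambda_j$ over $\lambda_j \in (0,\infty)$,
so it suffices to analyze each coordinate in isolation and then assemble
the results, since a sum of strictly convex functions of disjoint
variables is strictly convex jointly.

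For each fixed coordinate $j$ with $u_{t,j} > 0$, I would just compute
first and second derivatives: $f_j'(\lambda_j) = 1 - u_{t,j}/\lambda_j$
and $f_j''(\lambda_j) = u_{t,j}/\lambda_j^2 > 0$ on the open half-line.
Positivity of $f_j''$ yields strict convexity, and solving
$f_j'(\lambda_j) = 0$ gives the unique stationary point
$\lambda_j^\star = u_{t,j}$; strict convexity upgrades this stationary
point to the unique global minimizer on $(0,\infty)$. Summing over $j$
then identifies the unique global minimizer of $f$ as the vector
$\lambda^\star = u_t$, which is exactly the statement of the corollary.

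The one subtlety I would flag is the edge case $u_{t,j} = 0$: here
$f_j(\lambda_j) = \lambda_j$ is affine, hence convex but not strictly
convex, and its infimum on $(0,\infty)$ is attained only in the limit
$\lambda_j \downarrow 0 = u_{t,j}$. So the statement ``strictly convex in
each $\lambda_j > 0$'' should be read as conditional on $u_{t,j} > 0$, and
I would either add this assumption explicitly or note that the
minimizer identity $\lambda_j = u_{t,j}$ still holds at the boundary in
this degenerate case. Alternatively, one can appeal to
Proposition~\ref{prop:bregman}, which shows that $f$ differs from the
Bregman divergence $D_\varphi(u_t\Vert \lambda)$ by a $\lambda$-independent
constant; strict convexity of $D_\varphi$ in $\lambda$ and its
well-known vanishing locus $\{\lambda = u_t\}$ then yield both claims
simultaneously.

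I do not expect any real obstacle here beyond bookkeeping of the
$u_{t,j} = 0$ corner case; the core content is a one-line calculus
argument applied coordinatewise, which is why the corollary is stated
without proof as a direct consequence of the preceding Bregman
reformulation.
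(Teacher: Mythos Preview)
Your proposal is correct. The paper does not give a dedicated proof of this corollary: the proof environment that follows it in the text actually establishes Proposition~\ref{prop:bregman} (the equivalence of $\sum_j[\lambda_j - u_j\log\lambda_j]$ with $D_\varphi(u\Vert\lambda)$ up to a $\lambda$-independent constant), and the corollary is left as immediate. Your coordinatewise second-derivative argument is the standard way to cash this out, and your alternative via the Bregman reformulation is precisely the implicit justification the paper relies on; your handling of the $u_{t,j}=0$ edge case is a nice addition that the paper does not address.
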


\begin{proof}
Similarly to Edit Flows~\citep{havasi2025edit}, we make use of the cross-entropy generator function to define the potential function for the Bregman divergence :
\[
\forall (z\ge 0), \quad \phi(z)=z\log z - z.
\]
Given ground-truth nonnegative targets $u=\{u_j\}_j$ and model predictions
$\lambda=\{\lambda_j\}_j$, the separable Bregman divergence is
\[
D_\phi(u\Vert \lambda)
=
\sum_j\Big(\phi(u_j)-\phi(\lambda_j)-\phi'(\lambda_j)\,(u_j-\lambda_j)\Big).
\]
Which simplifies for each coordinate $j$ to
\begin{equation}
\resizebox{\linewidth}{!}{$
\begin{aligned}
&\phi(u_j)-\phi(\lambda_j)-\phi'(\lambda_j)(u_j-\lambda_j)\\
=& \big(u_j\log u_j - u_j\big) - \big(\lambda_j\log \lambda_j - \lambda_j\big)
- (\log \lambda_j)\,(u_j-\lambda_j).
\end{aligned}
$}
\end{equation}

Collecting terms gives
\begin{equation}
\resizebox{\linewidth}{!}{$
\begin{aligned}
D_\phi(u\Vert \lambda)
& = \sum_j \Big(u_j\log \tfrac{u_j}{\lambda_j} + \lambda_j - u_j\Big)\\
& = \sum_j \Big(\lambda_j - u_j\log \lambda_j\Big) + \underbrace{\sum_j \big(u_j\log u_j - u_j\big)}_{\text{constant in }\lambda}.
\end{aligned}
$}
\end{equation}
Since $\sum_j (u_j\log u_j - u_j)$ does not depend on $\lambda$, minimizing
$\mathbb{E}[D_\phi(u\Vert \lambda)]$ over model parameters is equivalent to minimizing $\mathbb{E}\Big[\sum_j \lambda_j - u_j\log \lambda_j\Big]$.

For the reveal schedule $\kappa:[0,1]\!\to\![0,1]$ with hazard
$\rho_\kappa(t)=\dot\kappa(t)/(1-\kappa(t))$, the true marginal insertion rate at snapshot $(X_t,t)$ is
\[
u_{t,j}(X_t) = \rho_\kappa(t)\,K_j(X_t)\,\mathbf{1}_{[0,1]}(t),
\]
where $K_j(X_t)$ is the pending-count in slot $j$.
The general form of the insertion loss then becomes
\begin{equation}
\resizebox{\linewidth}{!}{$
\begin{aligned}
\mathcal{L}_{\mathrm{ins}}(\theta)
 & =\mathbb{E}\!\left[
      \sum_j \lambda_{t,j}(X_t)
      - u_{t,j}(X_t)\,\log \lambda_{t,j}(X_t)
    \right] \\
& = \mathbb{E}\!\left[
      \sum_j\!\lambda_{t,j}(X_t)\!-\! \rho_\kappa(t)\!K_j(X_t)\!\mathbf{1}_{[0,1]}(t)\!
      \log\!\lambda_{t,j}(X_t)\!\right]\!.
\end{aligned}
$}
\end{equation}

In particular, when using the linear scheduler $\kappa(t)=t$, then $\rho_\kappa(t)=\tfrac{1}{1-t}$ and

\begin{equation}
\resizebox{\linewidth}{!}{
$\mathcal L_{\mathrm{ins}}(\theta)\!=\!
\mathbb{E}\!\left[\!\sum_j\! \lambda_{t,j}(X_t)
  -
  \frac{K_j(X_t)\,\mathbf{1}_{[0,1]}(t)}{1-t}
  \log \lambda_{t,j}(X_t)
\right]$}
\end{equation}
\end{proof}

\subsection{Generalized insertions with Poisson thinning}
\label{sec:poisson-thinning}

\mypar{From Bernoulli to Poisson}
The Bernoulli-thinning sampler in  \Cref{alg:flowception_sampling} draws a single bit per slot and per step, which permits at most one insertion in slot $j$ during the step of size $\Delta$.
As $\Delta\!\to\!0$, the sum of independent Bernoulli micro-trials with success probability
$1-\exp(-\lambda\,\Delta)$ converges in law to a Poisson random variable with mean
$\int \lambda(s)\,ds$. This suggests a finite-step scheme that explicitly allows multiple insertions.

\mypar{Poisson process}
A time-inhomogeneous Poisson process on $[t,t+\Delta)$ with instantaneous rate $\rho(s)\ge 0$ has independent increments and satisfies
\[
N([t,t+\Delta)) \;\sim\; \mathrm{Poisson}\!\left(\int_t^{t+\Delta}\rho(s)\,ds\right).
\]
If $\rho(s)$ is approximately constant on the step, then
$N \sim \mathrm{Poisson}(\rho(t)\,\Delta)$.
Conditioned on $N=n$, the event times are i.i.d.\ uniform in the interval.

\mypar{Drawing from Poisson process}
To enable multiple insertions per slot, we replace the Bernoulli draw with a Poisson draw
\begin{align}
N_{t,j}\,\big|\,(\X_t,t)\;&\sim\;\mathrm{Poisson}\!\big(\Lambda_{t,j}\big),\\ \nonumber
\Lambda_{t,j}\;&\approx\;\Delta\,u_{t,j}(\X_t)\;=\;\Delta\,\rho_\kappa(t)\,K_j(\X_t).
\end{align}
This enables us to insert $N_{t,j}$ new elements into slot $j$ (each initialized with independent base noise) and doing this in parallel across all slots. 
This preserves the expected number of births per step, $\mathbb{E}[N_{t,j}]\approx \Delta\,u_{t,j}(\X_t)$, and removes the at most one insertion per slot constraint.

\section{Discussions}
\subsection{Baseline implementation details}

\mypar{Full-sequence}
For the full-sequence model training, we follow standard setups~\citep{wan2025, HaCohen2024LTXVideo} and sample timesteps during training according to a lognorm schedule.
Similarly to Flowception, we expand the channel dimension of the input by a factor two, and use the second half to encode the (clean) context frames, allowing to support the image-to-video framework.
For training on variable length videos, we experiment with two strategies. 
(1) In each batch we have  videos of different length, and we mask out the loss on padded frames. (2) in each batch we collect videos of the same length only. 
We find (2) to perform more favorably in preliminary experiments and consequently use it for our remaining experiments.

\mypar{Autoregressive}
We also use a lognorm schedule for the timestep sampler, while the number of context frames is sampled randomly with uniform probability across the length of the video.

\subsection{Finetuning Pre-trained Models}
In order to finetune open-source models with our method, there are some changes that need to be done to the architecture to adapt to our framework, we describe these changes below.
\begin{itemize}
    \item Framewise time embeddings: In \ours, each frame in the video can have different noise level associated to a timestep $t_i$, we therefore adapt the AdaLN layers to modulate each frame independently according to its noise level.
    \item Variable length: we extend the model to support variable length sequences per-batch by feeding a frame activity mask which is responsible for the masking padding frames from the attention.
    \item Insertion rate tokens: We append one learnable frame token per frame to the sequence (after patch embedding and before the transformer stack). This token participates in all self and cross-attention layers, takne the rope coordinates corresponding to the gap between layers at the center spatial position $(c_x, c_y, c_t = W/2, H/2, t_i)$, allowing it to aggregate frame-level information to predict the insertion rates. 
    After the transformer layers, a lightweight head followed by an exponential activation is used to predict the insertion rates.
\end{itemize}

\subsection{\ours as implicit temporal compression}

When a frame is inserted as pure noise $(x_{\text{new}}=\varepsilon,\ t_{\text{new}}=0)$, its clean identity among the ($K$) pending in-slot frames is unresolved at birth. 
Under the masked Flow Matching objective, the population-optimal first velocity is the conditional mean {\it over the posterior of the missing frames} (integrating both which clean frame it will become and that frame’s content):
\begin{align}
    v^{\star}_{\text{new}} & = \mathbb{E}_{\text{miss}} \big[X_{1,\text{new}}\mid X_t,t,M\big] - \varepsilon \\
    & =\mathbb{E}_{\text{miss}}\big[Z\mid X_t,t,M\big]-\varepsilon,
\end{align}
where $Z$ is a random clean frame drawn from the posterior over the $K$ not-yet-revealed frames in that slot induced by the snapshot law and the insertion-rate hazard $u
_{t,j}(X_t)=\rho_\kappa(t),K_j(X_t)$, with $\rho_\kappa(t)=\dot\kappa(t)/(1-\kappa(t))$, and $\rho_\kappa(t)=1/(1-t)$ for linear $\kappa$. 
Thus the first update points from noise toward a {\it group-wise conditional expectation over the missing frames}, not toward a single target. 

In full-sequence flow matching, by contrast, all frames are present (as noise) at every step, so there is no identity ambiguity: the optimal direction for index $j$ is the per-index conditional mean $ \mathbb{E}[X_{1,j}\mid X_t,t]-X_{0,j}$, \ie, no marginalisation over missing content. 
The Flowception update therefore acts like an implicit temporal aggregator early on: 
active tokens move toward expectations that  average over as-yet unseen in-between motion, while the unrevealed frames are integrated out.

\section{Efficiency Comparison}

We now study the impact of using \ours on sampling efficiency compared to full-sequence diffusion and autoregressive paradigms. Let $L=H\!\times\!W$ denote the number of tokens per frame, and $n$ the number of frames. The total sequence length is therefore $nL$.
For this analysis we disregard the text tokens and the per-frame extra rate token for \ours since we are only interested in orders of magnitude.

\mypar{Full-sequence}
Full-sequence diffusion and flows evolve all frames simultaneously, sharing a global timesteps between the $n$ frames.
At each sampling step all $n L$ tokens are active, and self-attention dominates with quadratic cost. The total complexity over $T_{\text{full}}$ steps is therefore
\[
\mathcal{C}_{\text{full}} \approx T_{\text{full}} \,(n L)^2.
\]

\mypar{Autoregressive (no caching)}
In autoregressive diffusion/flow-matching, frames are generated sequentially, one at a time, each conditioned on all previously generated frames.
A generation step involves appending a noise frame to the end of the sequence before evolving it using flow matching, which requires $T_\text{AR}$ inner steps in order to evolve the noise sample into a valid frame by predicting the velocity field for that frame.
At step $j$, the active sequence length is $jL$, yielding a cumulative cost
\[
\mathcal{C}_{\text{AR}} \approx T_{\text{AR}} \sum_{j=1}^{n} (jL)^2
\leq \tfrac{1}{3}\,T_{\text{AR}}\,L^2 n^3.
\]
This cubic dependence on $n$ makes autoregressive diffusion substantially more expensive than full-sequence diffusion.

\mypar{Autoregressive diffusion (with caching)}
With key-value (KV) caching, past tokens do not need to be recomputed. At step $j$, attention is computed only between the $L$ new tokens and the cached $jL$ past tokens, for cost $\mathcal{O}(j L^2)$. Summing across $n$ frames yields
\[
\mathcal{C}_{\text{AR+cache}} \approx T_{\text{AR}} \sum_{j=1}^n j L^2
\approx \tfrac{1}{2}\,T_{\text{AR}}\,L^2 n^2.
\]

\mypar{Flowception}
When starting from the empty sequence and under a linear insertion scheduler, the active fraction is $\kappa(\tau){=}\tau$ and the (expected) active sequence length art any point is $R_\tau{=}\tau n L$. 
Averaging the quadratic self-attention cost over the trajectory yields
\[
\mathcal{C}_{\text{flow}} \approx T_{\text{flow}}\, (n L)^2 \,\mathbb{E}_\tau[\tau^2]
= \tfrac{1}{3}\,T_{\text{flow}}\, (n L)^2.
\]
Allowing Flowception to take $\alpha$ times more steps than the baseline ($T_{\text{flow}}=\alpha T_{\text{full}}$),  to account for the delayed denoising of frames interted later, gives
\[
\mathcal{C}_{\text{flow}} \approx \tfrac{\alpha}{3}\,T_{\text{full}}\, (n L)^2.
\]
In our experiments we set $\alpha=2$ to roughly allow the same number of denoising steps per frame as the full-sequence model, even for frames inserted close to $t_g=1$.

\mypar{Comparison}
The asymptotic speedups between the different methods are
\begin{align*}
\text{speedup}_{\text{FC vs. Full}} & = \frac{\mathcal{C}_{\text{full}}}{\mathcal{C}_{\text{flow}}}
\approx \frac{3}{\alpha}\\
\text{speedup}_{\text{FC vs. AR}} & \approx \frac{n}{\alpha}\,\frac{T_{\text{AR}}}{T_{\text{full}}}\\
\text{speedup}_{\text{FC vs. AR+cache}} & \approx \frac{3}{2\alpha}\,\frac{T_{\text{AR}}}{T_{\text{full}}}
\end{align*}
\ours achieves computational efficiency by interleaving discrete frame insertions with continuous denoising, resulting in an implicit temporal compression that reduces the average sequence length per sampling step. Unlike AR without caching models which exhibit cubic complexity $\mathcal{O}(n^3)$ in the number of frames $n$, \ours scales quadratically, $\mathcal{O}(n^2)$, providing a significant asymptotic speedup. Compared to full-sequence diffusion, the method achieves a theoretical speedup of approximately $3/\alpha$ (yielding $1.5\times$ for $\alpha=2$) by leveraging a linear insertion schedule that avoids redundant computation on noise-only tokens.
We summarize the complexities of the different frameworks in \Cref{tab:complexity}.

\ifcvpr
{
    \begin{table}
    \caption{{\bf Comparison of expected FLOPs during sampling.} }
        \scriptsize
        \centering
        \begin{tabular}{lcccc}
            \toprule
              & Full-seq & AR & AR+cache & \ours \\ 
             \midrule
             Complexity & $T_\text{full} (n L)^2$ & $\frac{n}{3} T_\text{AR} (nL)^2$ & $\frac{1}{2} T_\text{AR} (nL)^2 $ & $\frac{\alpha}{3}   T_\text{full} (n L)^2$ \\
             \bottomrule
        \end{tabular}
        \label{tab:complexity}
    \end{table}
}
\else
{
    \begin{table}
    \caption{{\bf Comparison of expected FLOPs during sampling.} }
        \resizebox{\linewidth}{!}{
        \centering
        \begin{tabular}{lcccc}
            \toprule
              & Full-seq & AR & AR+cache & \ours \\ 
             \midrule
             Complexity & $T_\text{full} (n L)^2$ & $\frac{n}{3} T_\text{AR} (nL)^2$ & $\frac{1}{2} T_\text{AR} (nL)^2 $ & $\frac{\alpha}{3}   T_\text{full} (n L)^2$ \\
             \bottomrule
        \end{tabular}}
        \label{tab:complexity}
    \end{table}
}
\fi
\section{Algorithms \& implementation}
\label{app:algorithms}

We provide algorithms for \ours training and sampling procedures.

In \Cref{alg:flowception_sampling}, we provide a sketch of the sampling algorithm, assuming a number of starting frames $n_\text{start}$ and a step size $h$ that is shared between insertions and flow matching.
For simplicity, we do not include context frames in the sketch of the algorithms.
We start with $t_i=0$ for the  starting frames.
Each sampling step iterates two operations, flow matching on the current set of frames, followed by insertions to the right of each frame $i \in \{1, \dots, \ell(X)\}$, which happens with probability $h_i \lambda_i  \tfrac{\dot{\kappa}(t_g)}{1 - \kappa(t_g)}$ where $\lambda_i$ is the rate associated with the frame $i$.
When a new frame is inserted, we insert a new frame as a pure noise $\ins(X,i,\epsilon), \epsilon \sim \mathcal{N}(0, I)$.

\begin{algorithm}[t]
  \caption{Flowception generation procedure}
  \begin{algorithmic}[1]  
    \Function{FlowceptionGeneration}{step size $h$}
      \State $X \sim \prod_{i=1}^{n_\text{start}} \mathcal{N}(X^i ; 0, I)$
      \State $t \gets [0, \dots, 0] = [0]^{n_\text{start}}$
      \Comment{{\color{gray!50} per-frame times}}
      \State $t_g \gets 0$
      \Comment{{\color{gray!50} global time}}
      \While{$\min\{t_i\} < 1$} \Comment{{\color{gray!50} iterate until all frames are clean}}
        \State $X, t, t_g \gets \textsc{FlowceptionStep}(X, t, t_g, h)$
      \EndWhile
      \Return $X$
    \EndFunction
  \end{algorithmic}
  \begin{algorithmic}[1]  
    \Function{FlowceptionStep}{$X, t, t_g, h$}
      \State $v, \lambda \gets \texttt{FlowceptionModel}(X, t; \theta)$
      \State $h_i = \min\{h, 1 - t_i\}$
      \Comment{{\color{gray!50} clean frames are frozen}}
      \State $X \gets X + h v$ 
      \Comment{{\color{gray!50} apply flow step to denoise}}
      \State $t \gets \text{clip}(t + h, 0, 1)$
      \State $t_g \gets \text{max}(t)$
      \Comment{{\color{gray!50} update time trackers}}
      \State {$\triangleright$ {\color{gray!50}all insertions are implemented in parallel}}
        \ForAll{$i \in \{1,\dots,\ell(X)\}$} 
          \State \textbf{with probability} $h_i\lambda_i  \tfrac{\dot{\kappa}(t_g)}{1 - \kappa(t_g)}$: 
          \State{\qquad $\triangleright$ {\color{gray!50} inserted frames are set to pure noise}}
          \State \qquad $X = \ins(X, i, \varepsilon)$ where $\varepsilon \sim \mathcal{N}(0, I)$
          \State{\qquad $\triangleright$ {\color{gray!50} inserted time values are set to zero}}
          \State \qquad $t = \ins(t, i, 0)$
        \EndFor
        \State \Return $X, t$
    \EndFunction
  \end{algorithmic}    \label{alg:flowception_sampling}
\end{algorithm}

We detail the training algorithm in \Cref{alg:flowception_train}.
First, we sample a set of timesteps according to the \ours schedule, $\tau_i,\ i \in \{0, \dots,  n\}$, we map these timesteps to deletion operations according to the insertion schedule $X \leftarrow \strip(X_\text{target},M), t \leftarrow \strip(t, M)$.
Next, the remaining frames are noised according the rectified flow matching schedule $X = tX + (1-t)X_0$.
The model is then fed these noised frames and their associated timesteps, predicting their associated velocities and insertion rates.
The training losses are detailed in the main manuscript.

\begin{algorithm}[t]
  \caption{Flowception training procedure}
  \label{alg:flowception_train}
  \begin{algorithmic}[1]  
    \Require scheduler $\kappa$
    \Function{FlowceptionTrainingStep}{$\kappa$}
      \State $X_\text{target} \sim \ptarget$
      \State $\tau_g \sim p(\tau_g)$
      \Comment{{\color{gray!50} can choose \eg logit normal}}
      \State $u_i \sim \text{Unif}(0,1)$
      \State $\tau_i \gets \tau_g - \kappa^{-1}(u_i)$
      \Comment{{\color{gray!50} per-frame extended times}}
      \State $M_i \gets \1_{[\tau_i \geq 0]}$
      \Comment{{\color{gray!50} $i$-th frame is deleted if $\tau_i < 0$}}
      \State $\triangleright$ {\color{gray!50}sample noisy frames}
      \State $t_i \gets \clip(\tau_i)$
      \State $X_0 \sim \mathcal{N}(0, I)$
      \State $X = tX_\text{target} + (1-t)X_0$
      \State $\triangleright$ {\color{gray!50}  remove deleted frames}
      \State $X \gets \strip(X, M)$
      \State $t \gets \strip(t, M)$
      \State $v, \lambda \gets \texttt{FlowceptionModel}(X, t ; \theta)$
      \State $\mathcal{L} \gets \dots$
      \Comment{{\color{gray!50}compute insertion and velocity losses}}
      \State $\theta \gets \texttt{optimizer\_step}(\theta, \nabla \mathcal{L})$
    \EndFunction
  \end{algorithmic}
\end{algorithm}

\mypar{Time sampling}
using the time sampling in \Cref{alg:flowception_train}, it can happen that for the sampled $\tau_g$ all frames in a video are already denoised (in particular when all frames are inserted early), rendering the video useless for training. 
To ensure that there is at least one evolving frame per video, we can instead first sample the terminal time for the last frame in the video (when the last flow step happens), before deriving $\tau_g$ and sampling the individual offsets.
To do this we proceed in the following manner:
\begin{enumerate}
    \item Sample the insertion times $t_\text{ins}^i \sim \text{Unif}(0,1)$
    \item Compute maximum $\tau_\text{max}$ that we need: $\max\{ t_\text{ins}^i \} + 1$
    \item Sample $\tau_g \sim \text{Unif}(0, \tau_\text{max})$
    \item Compute $\tau_i = \tau_g -t_\text{ins}^i$
\end{enumerate}

\noindent
Additionally, following other works \citep{nguyen2025oneflowconcurrentmixedmodalinterleaved, wan2025, HaCohen2024LTXVideo}, we used a lognorm global schedule during training to be beneficial $\tau_g \sim \text{lognorm}(0,1) \cdot \tau_\text{max}$.

\mypar{Loss reduction}
Another important point is about the loss reduction for both the velocity and Poisson likelihood, since each sample in the training batch can have a varying number of frames which are still evolving, \ie with $0\leq\tau_i<1$, a question arises around how to reduce the velocity loss across these frames.
We experimented with both per-sample mean reduction (each video contributes equally independently from the number of active frames) and a mean reduction across all active frames in the batch (all frames in the batch contribute equally, so  shorter videos contribute less). 
We found the latter to be more stable, especially for longer sequences while the former tends to over-optimize for short sequences (early stages of sampling or shorter videos) which in turns biases the model to under-insert.

\mypar{Local sampling schedules}
In video generation, early diffusion timesteps are particularly important, as they establish global structure, motion, and temporal alignment before later steps primarily refine appearance. 
This effect is amplified in our setting, where frames are inserted asynchronously into an evolving sequence: immediately after insertion, a frame is highly uncertain and must rapidly become consistent with its temporal neighbors. 
To address this, the denoising time steps can be biased towards the start of the process.  For example, \cite{polyak2025moviegencastmedia} uses a linear-quadratic scheduler where the first portion of sampling follows a linear schedule with a small step size $\approx 1/1000$, while the remaining time steps follow a quadratic schedule to arrive at $t=1$.

To prioritize this post-insertion regime in \ours, we introduce a \emph{framewise time reparameterization}: each frame \(i\) maintains its own solver coordinate \(u_i \in [0,1]\) and physical diffusion time \(t_i \in [0,1]\), linked by a strictly increasing function \(t_i = f(u_i)\). During sampling, we advance the solver coordinates by a fixed step \(\Delta u\) for all active frames, compute the corresponding physical increments \(\Delta t_i = f(u_i + \Delta u) - f(u_i)\), and update \(x_i \leftarrow x_i + \Delta t_i\, v_\theta(x_i, t_i)\) in the same scalar time variable \(t \in [0,1]\) used during training. This preserves consistency with the training setup while allowing the effective step size in diffusion time to depend on a frame’s ``age'' since insertion. In practice, we instantiate this family with a power schedule \(t_i = u_i^\gamma\) (with \(\gamma > 1\)), which yields small \(\Delta t_i\) for newly inserted frames (small \(u_i\)) and larger \(\Delta t_i\) for older, well-established frames. 
As a result, the sampler allocates more computation to the crucial early timesteps after each insertion and fewer steps to later, easier denoising phases, leading to improved temporal coherence and fewer artifacts compared to a uniform schedule (\(\gamma = 1\)) under the same compute budget.

\section{Additional experiments}

In this section we provide additional experimental results to complement those in the main paper.

\mypar{Length modeling}
Here we  assess the ability of \ours to model the  length distribution of the sequences in the training set.
We create a toy dataset where the number of frames is either 15, 20, 25 or 30.
Each sample is a $3\time3$ pixel video where the middle pixel makes a discrete jump between two pixel values, while the boundary pixels make up a gradient between two colors that moves along the boundary with constant speed in order to achieve an integer number of rotations.
After training \ours on this dataset, we compare the histogram of ground truth and generated video lengths and plot them in \Cref{fig:toy_length}.
As expected the generated video lengths follow a similar distribution to the data distribution, with peaks around 15, 20, 25 and 30, while rarely generating videos with lengths outside these four modes. 

\begin{figure}
    \centering
    \includegraphics[width=.45\textwidth]{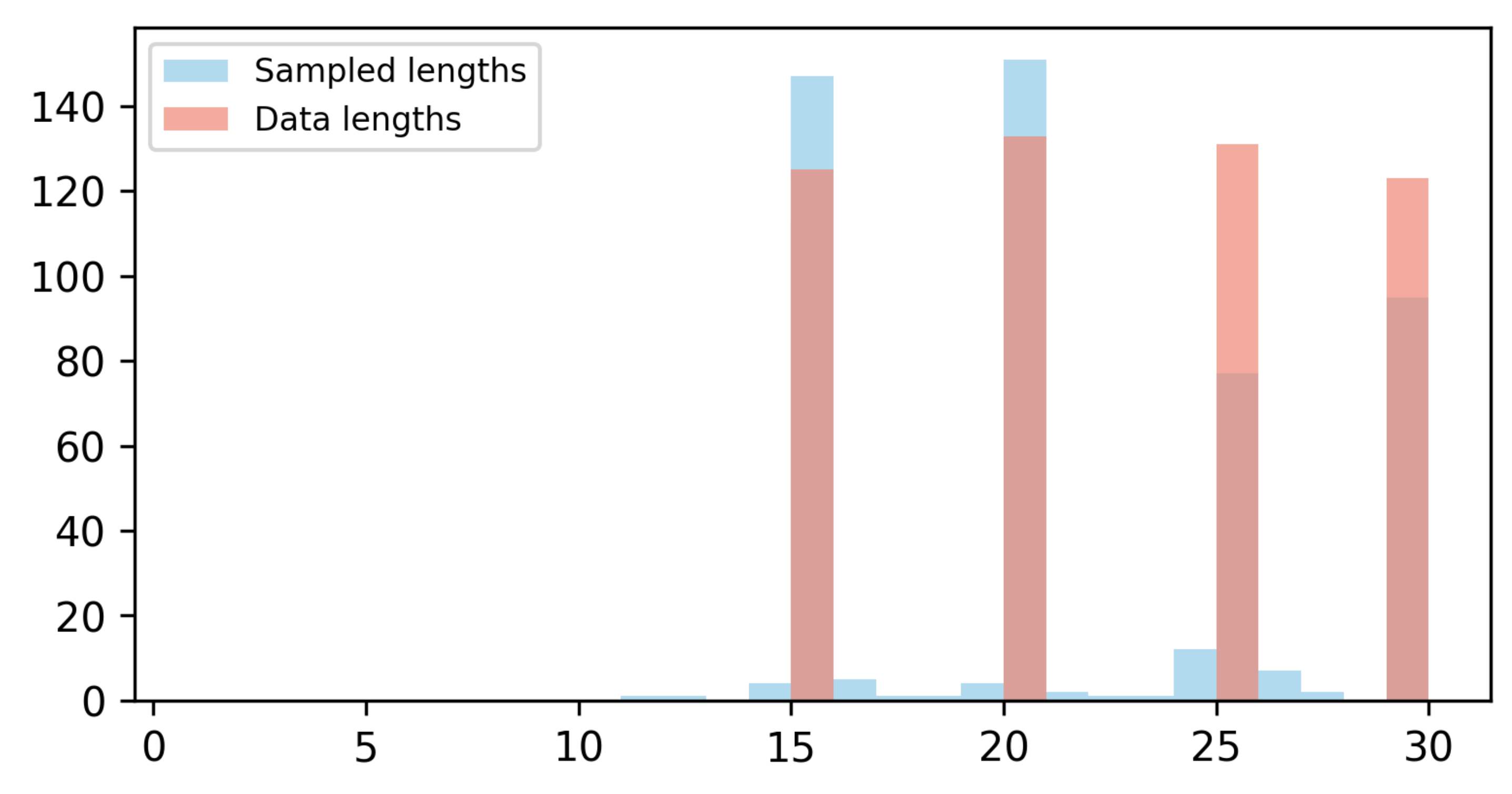}
    \caption{{\bf Video length matching.} Our framework is able to accurately reproduce the length of videos from the toy dataset.}
    \label{fig:toy_length}
\end{figure}

\mypar{Efficiency comparison}
To evaluate practical sampling efficiency, we perform a sweep over the number of sampling steps for both the autoregressive model and \ours to compute how FVD performance changes as a function of the number of sampling steps and FLOPs.
We report the  results in \Cref{fig:nfe_comparison}. 
First, we find that  the autoregressive model has a somewhat  lower number of FLOPs for a given number of sampling steps per frame, this is because \ours  denoise frames asynchronously so the total number of sampling steps is larger than the number of per-frame sampling steps. 
Second, we find that \ours  obtains  significantly better FVD  for a given number of FLOPs, with FVD plateauing at around  32 denoising  steps per frame, where the autoregressive model continues to improve at least up to 64 steps, but without closing the gap with \ours.
\ifcvpr
{
    \begin{figure} 
        \centering
        \includegraphics[width=.45\textwidth]{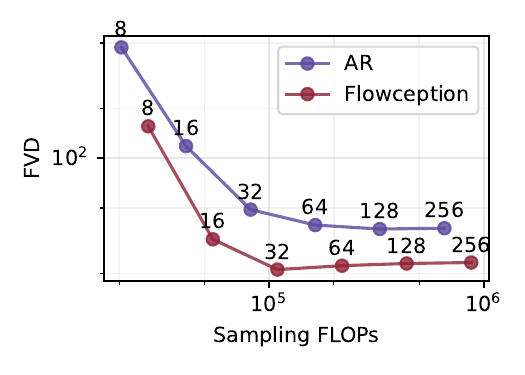}
        \caption{{\bf Efficiency comparison.} We compare the sampling efficiency of autoregressive model (with caching) with \ours, we plot the FVD on RealEstate10K as a function of the  FLOPs used for  sampling when varying the number of sampling steps.}
        \label{fig:nfe_comparison}
    \end{figure}
}
\else
{
    \begin{figure}
        \centering
        \includegraphics[width=.4\textwidth]{figures/ar_causal_steps.pdf}
        \caption{{\bf Efficiency comparison.} We compare the sampling efficiency of autoregressive model (with caching) with \ours, we plot the FVD on RealEstate10K as a function of the  FLOPs used for  sampling when varying the number of sampling steps.}
        \label{fig:nfe_comparison}
    \end{figure}

}
\fi

\mypar{Long video generation}
While our main experiments focus on clips within the model context length, \ours can be extended to longer videos with minimal changes. We consider two complementary approaches. First, we fine-tune the model on longer clips when available, which improves long-horizon stability while preserving short-clip performance. Second, to generate videos beyond the training/context window, we adopt a block-wise sampling strategy: we partition the target video into temporal blocks and sequentially sample each block conditioned on the previously generated frames. Concretely, we generate the first block from the text prompt, then iteratively generate the next block while conditioning on a short prefix of past frames (temporal overlap) to maintain appearance and motion continuity. This enables minute-scale rollouts without modifying the backbone architecture. 
We detail this sampling mechanism in \Cref{alg:flowception_chunked}.
In \Cref{app:qual} we provide examples of such long rollouts. 

\begin{algorithm}[t]
  \caption{Chunked Flowception generation}
  \label{alg:flowception_chunked}
  \begin{algorithmic}[1]
    \Function{ChunkedFlowceptionGeneration}{step size $h$, chunks $N$, window $L$, overlap $O$}
      \State Initialize an extended sequence $X$ with $n_\text{start}$ noisy frames (rest is padding)
      \State $t \gets [0,\dots,0]$ for valid frames \Comment{{\color{gray!50} per-frame times}}
      \State $t_g \gets 0$ \Comment{{\color{gray!50} global time}}
      \State Initialize chunk boundaries $\{[s_c,e_c)\}_{c=1}^N$ to cover valid frames (length $\le L$, overlap $O$)
      \While{$\min\{t_i\} < 1$} \Comment{{\color{gray!50} iterate until all frames are clean}}
        \State Extract chunk views $\{(X^{(c)},t^{(c)})\}_{c=1}^N$ using current boundaries $\{[s_c,e_c)\}$
        \ForAll{chunks $c \in \{1,\dots,N\}$ \textbf{in parallel}}
          \State $X^{(c)}, t^{(c)}, t_g \gets \textsc{FlowceptionStep}(X^{(c)}, t^{(c)}, t_g, h)$
        \EndFor
        \State Write chunk updates back into $X,t$ (blend overlap regions)
        \If{new frames were inserted in this step}
          \State Update chunk boundaries $\{[s_c,e_c)\}_{c=1}^N$ to again cover all valid frames (length $\le L$, overlap $O$)
        \EndIf
      \EndWhile
      \Return $X$
    \EndFunction
  \end{algorithmic}
\end{algorithm}

\section{Broader societal impact}
We recognize that our work could lead to potential negative societal impacts, as our method can help generate photorealistic videos, especially if combined with conditioning on real photos or videos.
Nonetheless, our work also paves the way for efficient and flexible video generation that can be beneficial in domains such as the entertainment or film industry, as well as world modeling frameworks.
As an example, animators could create coherent animations by providing a set of frames (either drawings or AI generated), which can significantly speed-up animation work flows.
Our method can be used to hierarchically generate very long videos of high quality, at a lower computational cost by adopting local attention variants, thereby reducing the energy footprint of generative video models.

\begin{figure*}[t]
    \centering
    \includegraphics[width=\linewidth]{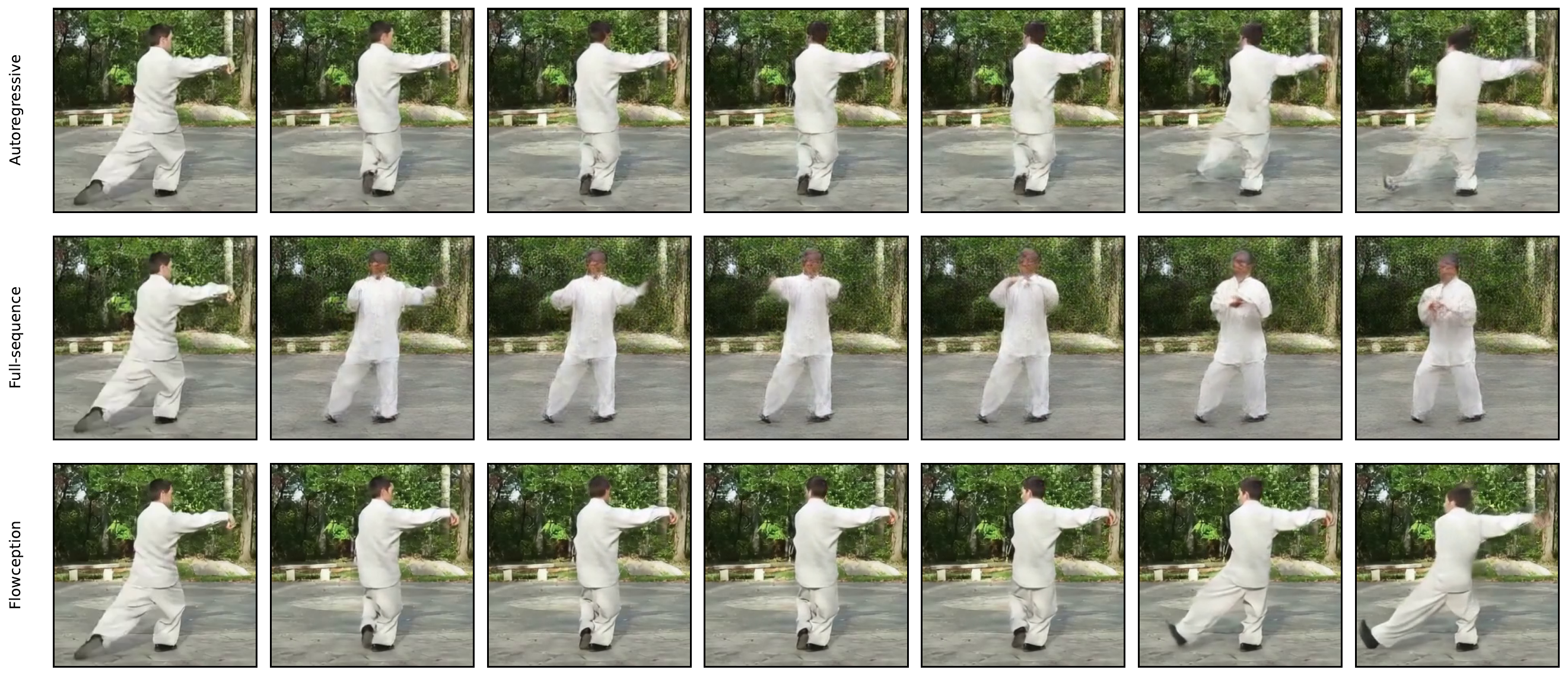}
    \caption{{\bf Comparing different methods using models trained on the Tai-Chi-HD dataset.} Using the same input frame (left) and random seed, we compare generations with the autoregressive, full-sequence and \ours models.}
    \label{fig:taichi_comp}
\end{figure*}

\begin{figure*}
    \centering
    \includegraphics[width=\linewidth]{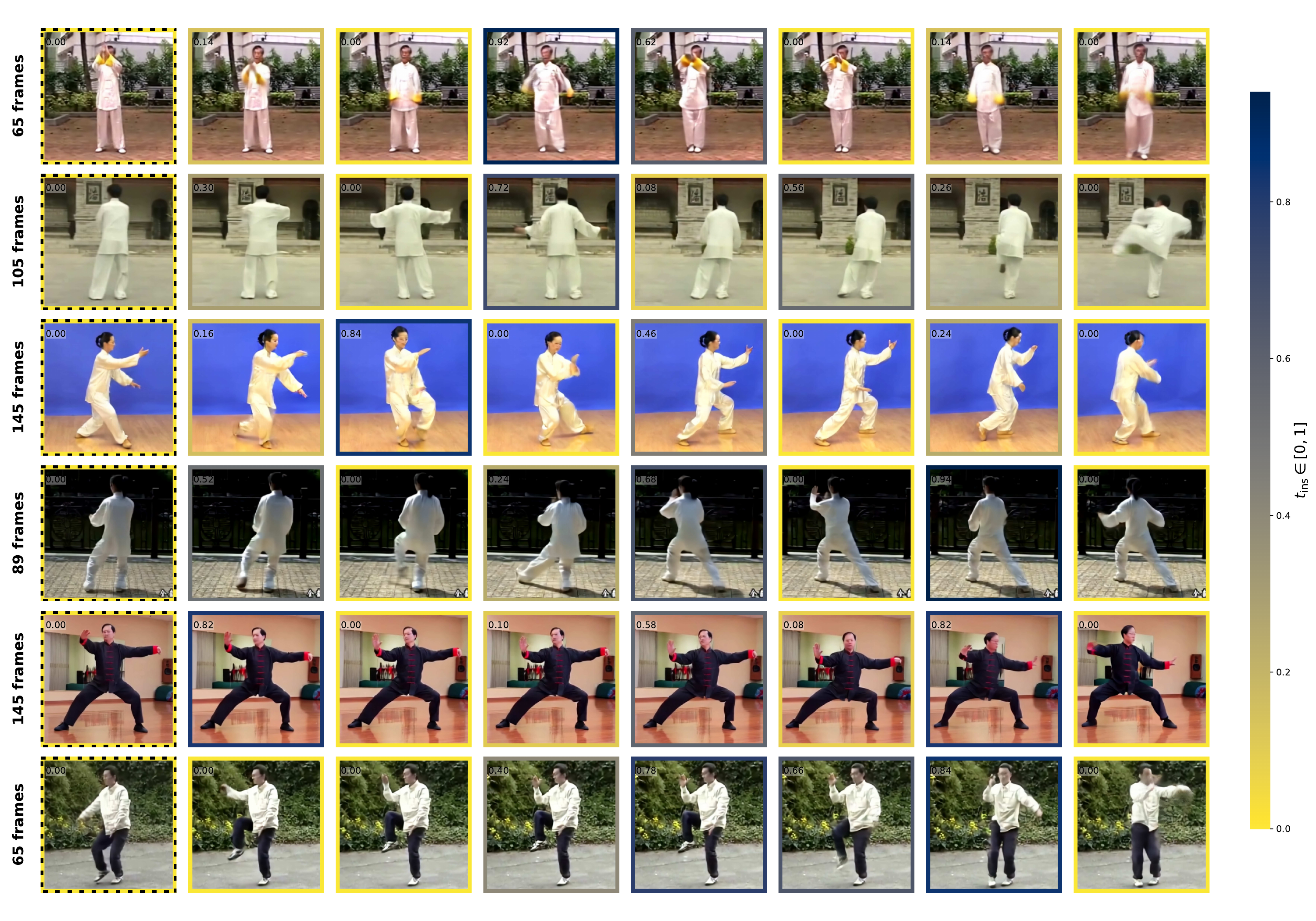}
    \caption{{\bf Additional qualitative examples on Taichi.} Each row corresponds to a different video obtained with our method for image-to-video generation}
    \label{fig:taichi_i2v_supp}
\end{figure*}

\section{Additional qualitative examples}
\label{app:qual}

The following are the captions used in \Cref{fig:teaser}:
\begin{enumerate}
    \item A yellow taxi moving through new york streets on a rainy night.
    \item A cute puppy wearing a yellow hat happily strolling in a field of roses.
\end{enumerate}

In \Cref{fig:t2v_appendix}, we provide additional qualitatives for the text-to-video generations obtained by finetuning the LTX model.
Below are the captions used:

\begin{enumerate}
    \item A compact humanoid robot with a smooth white helmet-like head, dark face panel, and a black armored body with glowing yellow/orange light accents walks through a busy New York City street at night. The scene feels like Times Square with bright billboards and crowds in the background.
    \item Pretty farmer. agriculture business concept. farmer girl examines the rose crops at sunset. farmer walk agriculture lifestyle rose concept. farmer works in a field with roses at sunset.
    \item Third person view of a man riding a boat in the sea towards a deserted island. A dolphin jumps in front of the boat.
    \item A wild horse walking by a lake. Beautiful scene.
    \item A cute bear in a knitted blue sweater and round glasses lounges on a sofa, reading a newspaper. A packed bookshelf and a crackling fireplace glow warmly in the background.
    \item A lone reindeer stands on a windswept snowy mountain ridge at sunrise. The camera slowly pushes in, revealing vast icy peaks and a glowing sky while the reindeer takes a few careful steps through fresh powder.
\end{enumerate}

In \Cref{fig:t2v_chunk}, we provide examples of long generation rollouts using our chunked sampling algorithm.
We used the following prompts:
\begin{enumerate}
    \item A man wearing a black leather jacket walks through the crowded streets of Venice at noon, then arrives at a small park with a flowing water fountain.
    \item A continuous cinematic shot of a wolf walking across an open field toward a small lake. The camera follows steadily from the side as the wolf moves through the grass and arrives at the water.
\end{enumerate}

In \Cref{fig:taichi_comp} we compare generations of \ours and the autoregressive  and Full-Sequence baselines  trained on the Tai-Chi-HD dataset for 300k iterations, the generations are of 145 frames with an FPS of 16.
For the autoregressive model, we observe that later frames suffer from drift  due to error accumulation, hindering their quality (see,\eg, the legs).
For the full-sequence model, the model struggles to accurately generate the high-frequency details of the video accurately (see, \eg, the face and foliage in the background).
In contrast, \ours results in a sharp video without error accumulation as the video progresses.
In \Cref{fig:taichi_i2v_supp} and \Cref{fig:qual_i2v_re10k} we provide further examples of image-to-video results obtained with \ours on the Tai-Chi-HD and RealEstate10k datasets respectively. 

In \Cref{fig:kinetics_2} and \Cref{fig:re10k_2}, we provide additional video interpolation results obtained with \ours on the  Kinetics 600 and RealEstate10K datasets, respectively; extending the results in \Cref{fig:kinetics_interp} of the main paper.
We always provide the first and last frames as context, plus at most  two additional intermediate frames.

\begin{figure*}
    \centering
    \includegraphics[width=\linewidth]{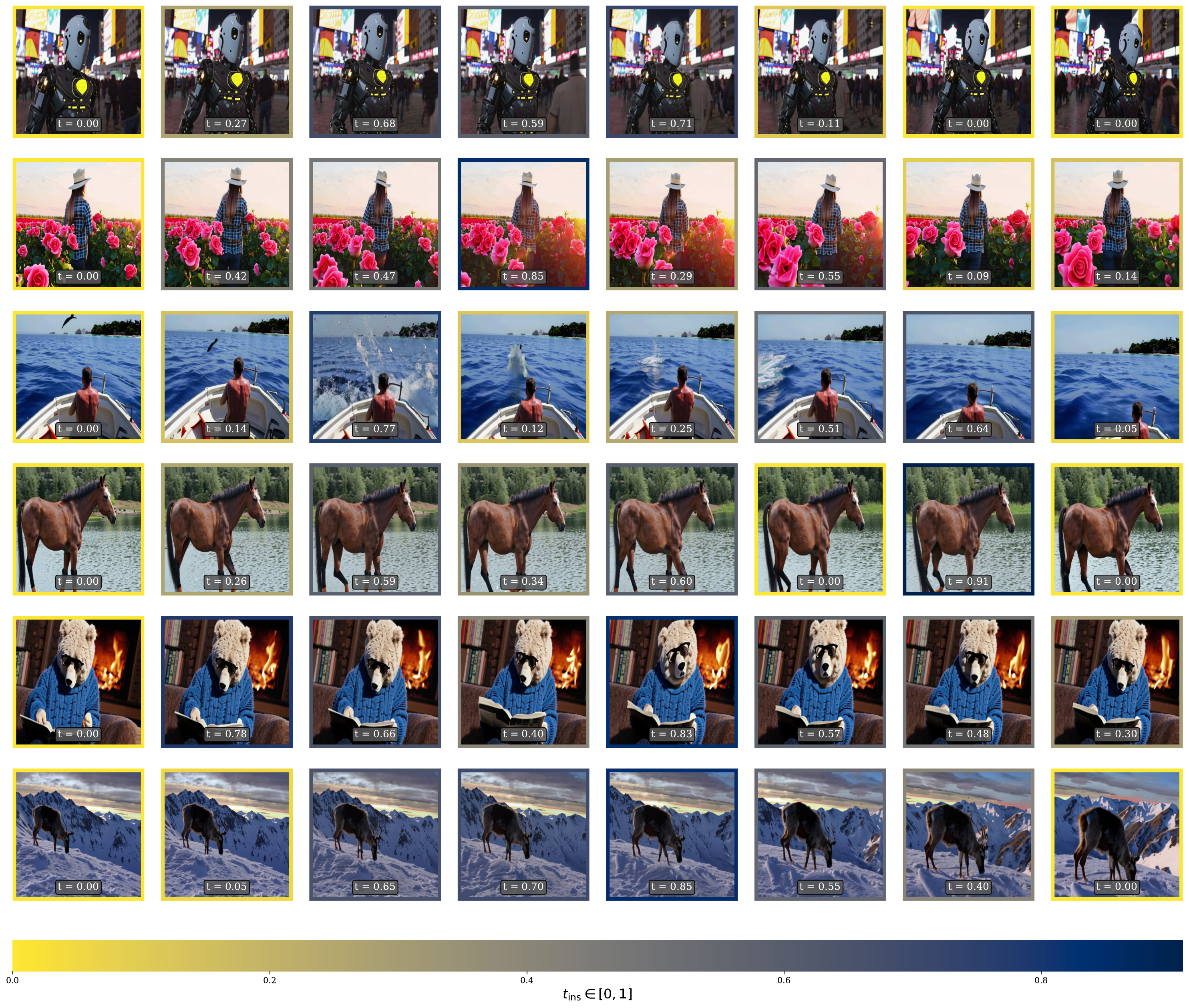}
    \caption{{\bf Text-to-video generations with \ours.} Additional qualitatives for the LTX-2B model finetuned on our internal data split of text-video pairs.}
    \label{fig:t2v_appendix}
\end{figure*}

\begin{figure*}
    \centering
    \includegraphics[width=\linewidth]{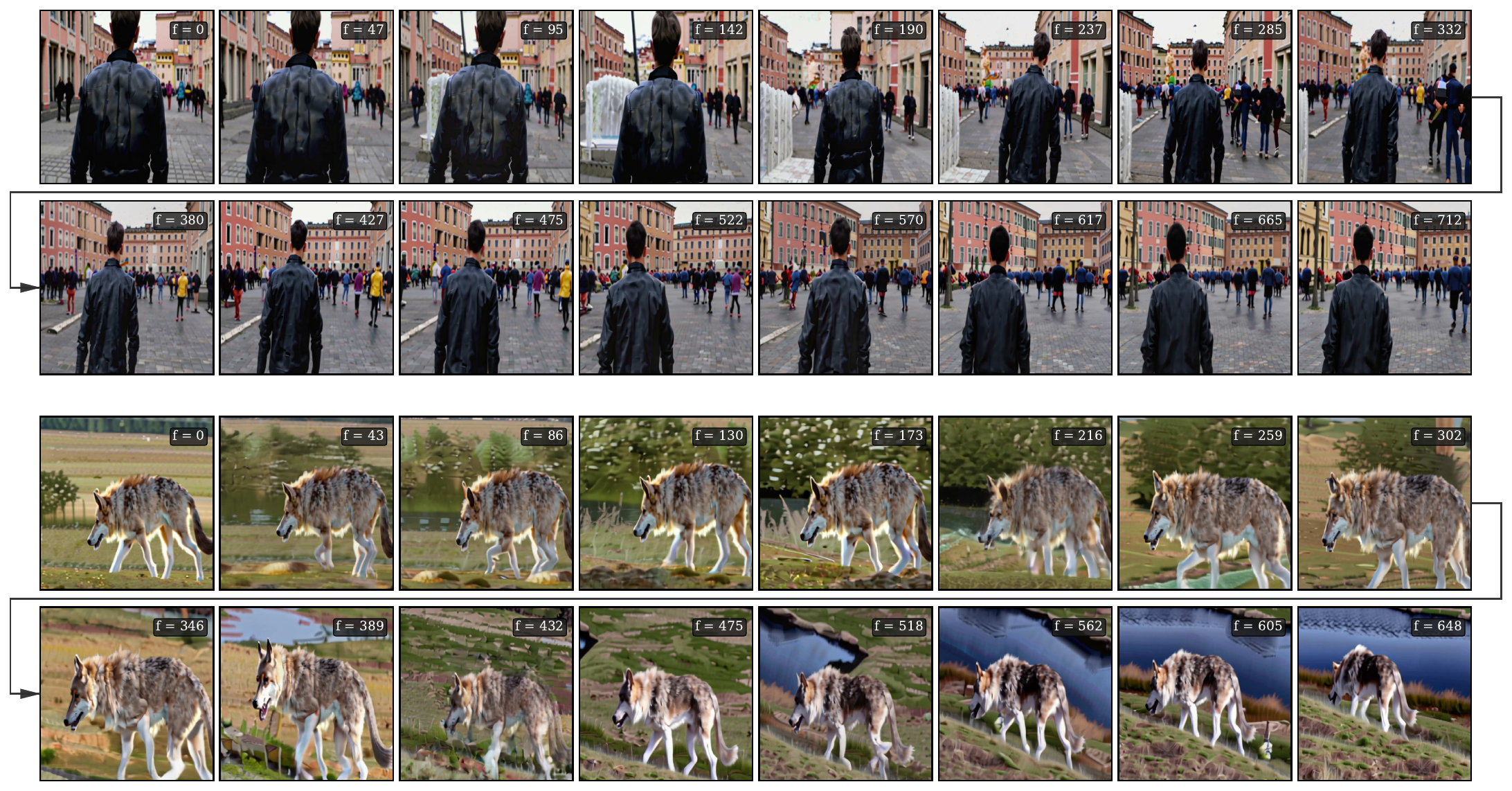}
    \caption{{\bf Chunked generation.} We demonstrate results obtained with the chunked sampling algorithm (detailed above) to generate videos $4\times$ longer than what the model was trained with. This method allows us to seamlessly rollout generations for up to a minute at $24$FPS. We write down the frame index on the top left corner of every generated frame.}
    \label{fig:t2v_chunk}
\end{figure*}

\begin{figure*}
    \centering
    \includegraphics[width=\linewidth]{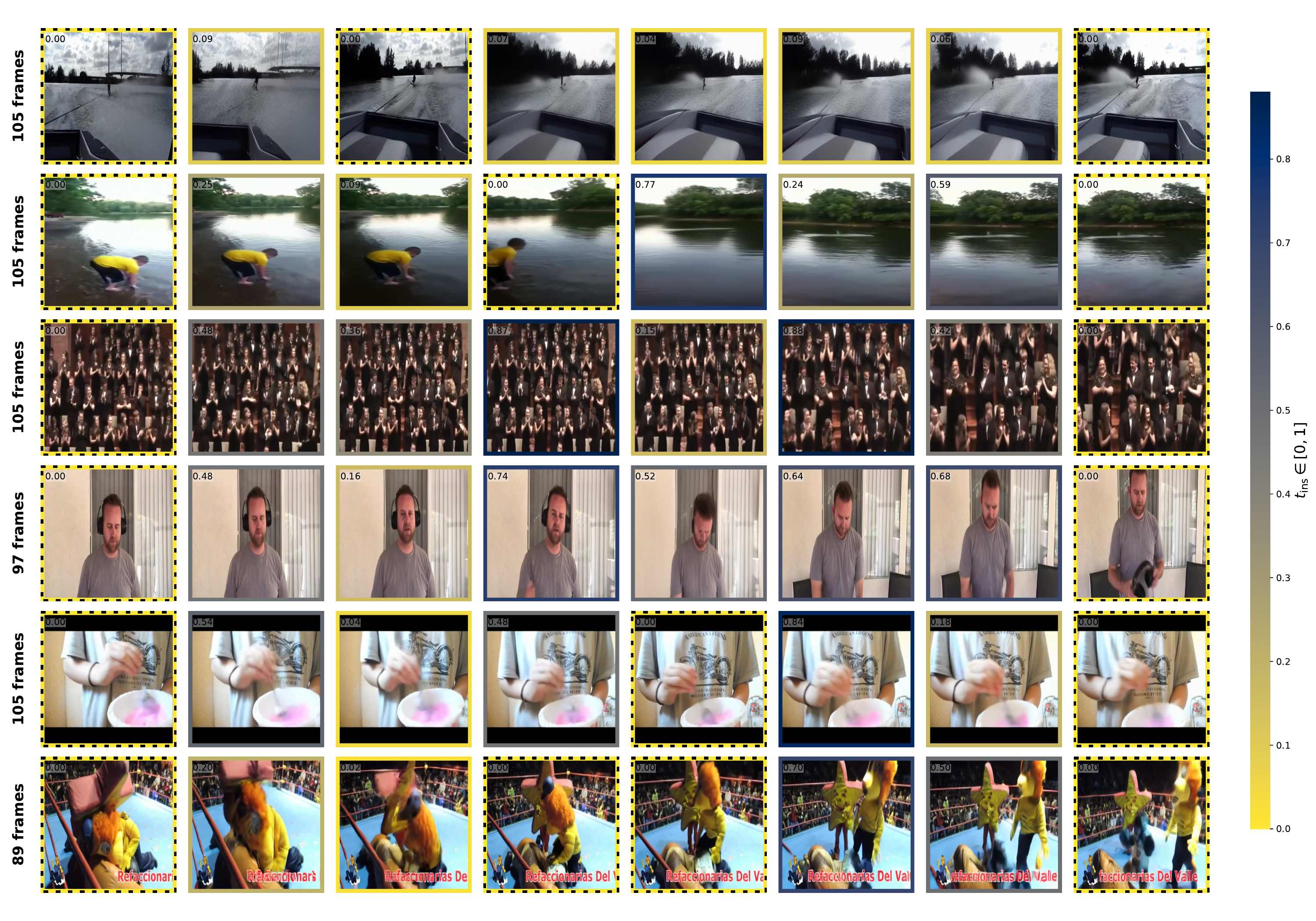}
    \caption{{\bf Additional qualitative examples on Kinetics 600 interpolation.} Each row corresponds to a different video where the first and last frame are given and up to two extra middle frames are also given. Insertion time is highlighted in the bodrer color of each frame, context frames are highlighted with dashed lines.}
    \label{fig:kinetics_2}
\end{figure*}

\begin{figure*}
    \centering
    \includegraphics[width=\linewidth]{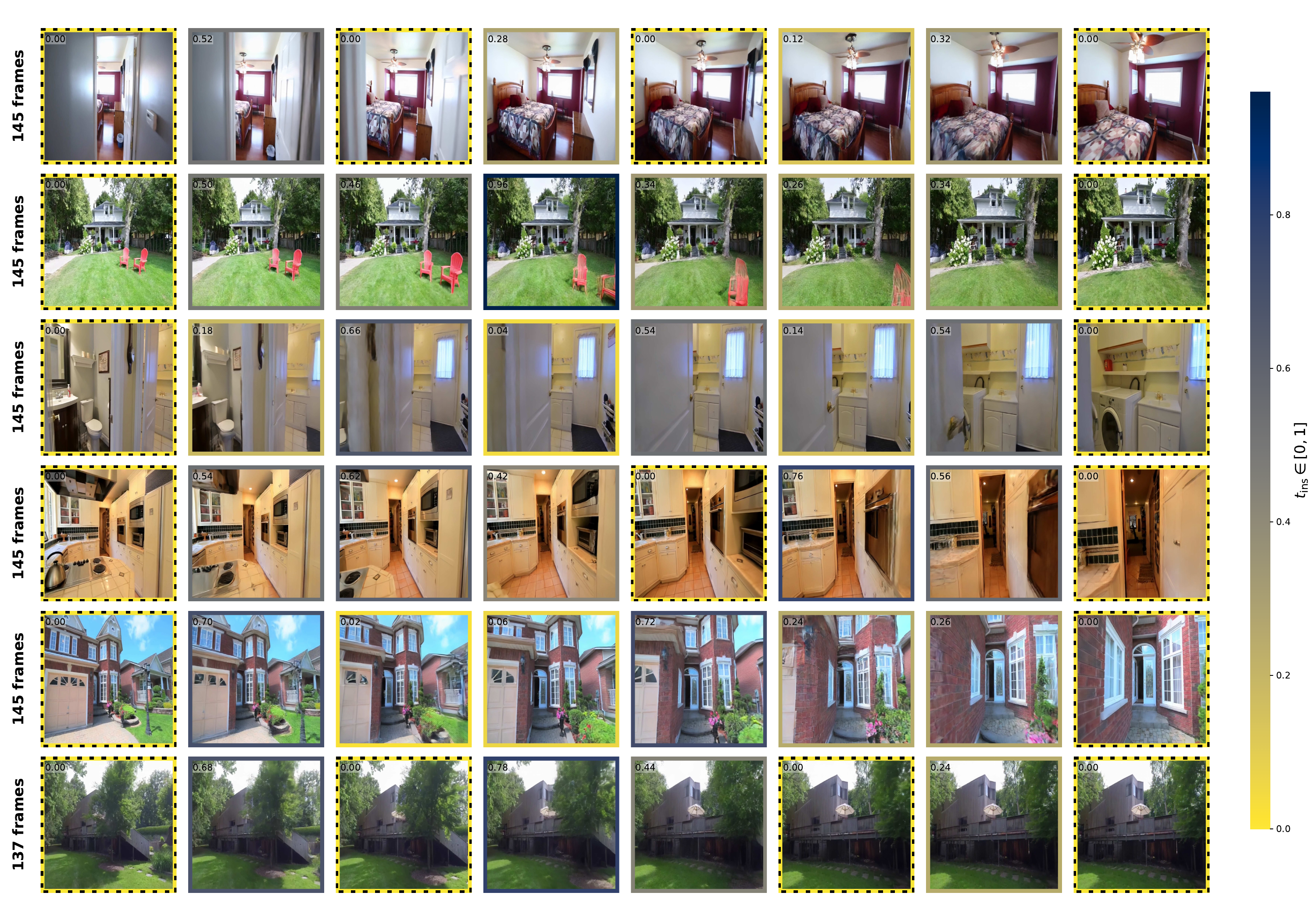}
    \caption{{\bf Additional qualitative examples on RealEstate10K interpolation.} Each row corresponds to a different video where the first and last frame are given and up to two extra middle frames are also given. Insertion time is highlighted in the border color of each frame, context frames are highlighted with dashed lines.}
    \label{fig:re10k_2}
\end{figure*}

\begin{figure*}
    \centering
    \includegraphics[width=\linewidth]{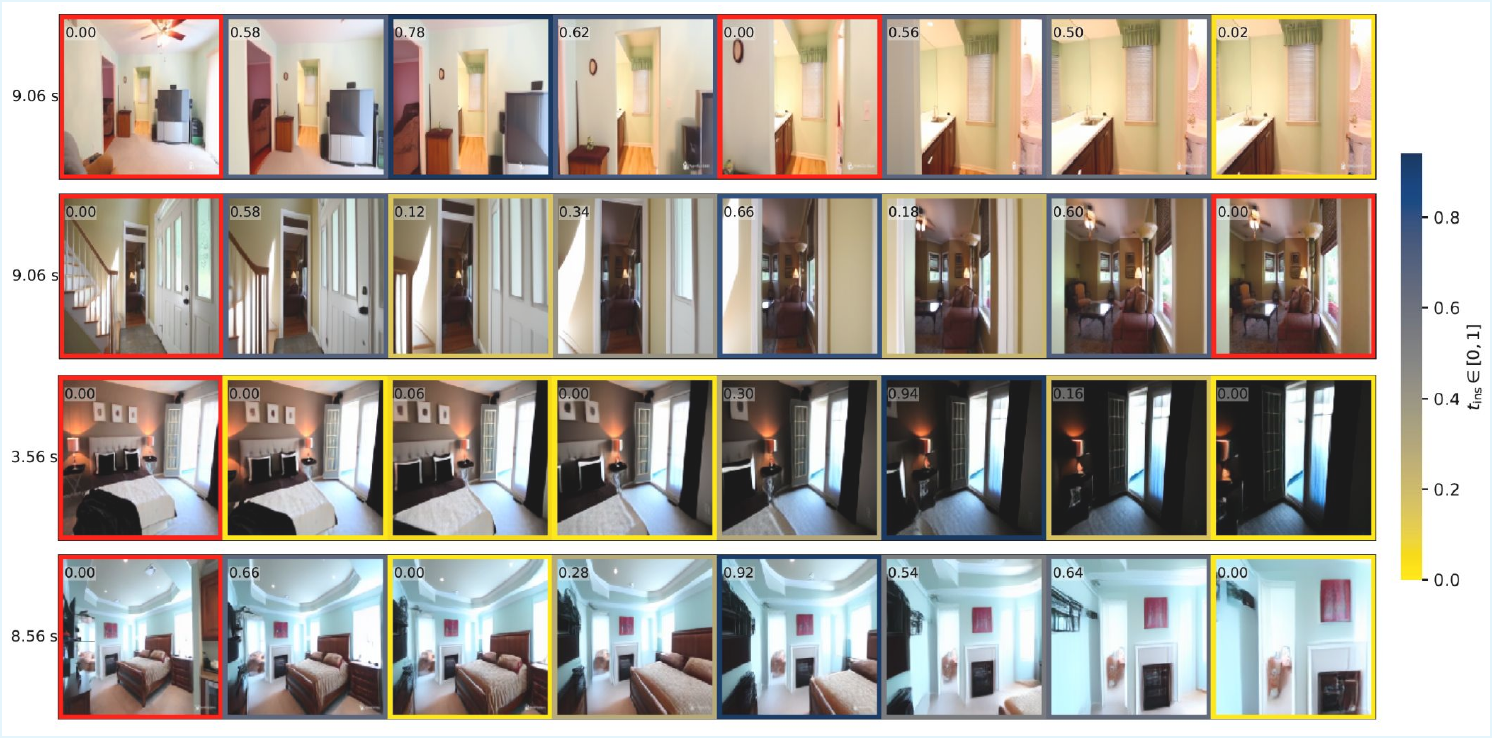}
    \caption{{\bf Qualitative examples of Image-to-Video  generation.} Using \ours trained on the RealEstate10K dataset.
    First shown frame is given as context.
    Given the initial frame, we generate videos of at most 145 frames at 16 FPS, corresponding to 9.06 secs.
    }
    \label{fig:qual_i2v_re10k}
\end{figure*}

\end{document}